\newtheorem{theorem}{Theorem}[section]
\newtheorem{lemma}[theorem]{Lemma}
\newtheorem{definition}[theorem]{Definition}
\def\R{{\mathbb{R}}}
\def\C{{\mathbb C}}
\def\x{{\mathbf x}}
\def\y{{\mathbf y}}
\def\c{{\mathbf c}}
\def\X{{\mathbf X}}
\def\Y{{\mathbf Y}}
 \def\W{{\mathbf{W}}}
   \def\A{{\mathbf{A}}}
\def\B{{\mathbf{B}}}
 \def\S{{\mathcal{S}}}
\title{HARFE: Hard-Ridge Random Feature Expansion}
\author[1]{Esha Saha}
\author[2]{Hayden Schaeffer\footnote{Corresponding Author}}
\author[1]{Giang Tran}
\affil[1]{Department of Applied Mathematics, University of Waterloo}
\affil[2]{Department of Mathematics,
University of California, Los Angeles}
\date{}
\begin{document}

\maketitle

\begin{abstract}
We propose a random feature model for approximating high-dimensional sparse additive functions called the hard-ridge random feature expansion method (HARFE). This method utilizes a hard-thresholding pursuit-based algorithm applied to the sparse ridge regression (SRR) problem to approximate the coefficients with respect to the random feature matrix. The SRR formulation balances between obtaining sparse models that use fewer terms in their representation and ridge-based smoothing that tend to be robust to noise and outliers.  In addition, we use a random sparse connectivity pattern in the random feature matrix to match the additive function assumption.   We prove that the HARFE method is guaranteed to converge with a given error bound depending on the noise and the parameters of the sparse ridge regression model. In addition, we provide a risk bound on the learned model.
Based on numerical results on synthetic data as well as on real datasets, the HARFE approach obtains lower (or comparable) error than other state-of-the-art algorithms. 
\end{abstract}

\section{Introduction}

Kernel-based approaches have been extensively used in data-based applications, including image classification and high-dimensional function approximations since they often perform well in practice. These approaches utilize a pre-defined nonlinear function basis in the form of a kernel $K(\x,\y)$. Using the representer theorem, minimizers of kernel training problems over reproducing kernel Hilbert spaces (RKHS) take the form of linear combinations of the kernel basis applied to the dataset
\cite{smola1998learning,hearst1998support, campbell2002kernel, zhang2005learning}. Since this technique requires one to apply the kernel $K$ to every pair of data samples, the methods scale quadratically with the size of the dataset, which can limit their use in large-scale applications. A more tractable approach utilizes low-dimensional approximations (or factorizations) of the kernel matrix through the use of randomized features.

The random feature model (RFM) \cite{rahimi2007random, rahimi2008uniform, rahimi2008weighted} is a popular technique for approximating the kernel (and thus the minimizer of kernel regression problems) using a randomized basis that can avoid the cost of full kernel methods. If the kernel is positive definite, then it can be written as an inner product with a feature function  $\phi$, i.e. $K(\x,\y) =\langle \phi(\x), \phi(\y) \rangle$. With RFM, a randomized feature map $\psi$ is used to approximate the kernel $K(\x,\y)\approx \psi(\x)^T \psi(\y)$, where $\psi:\R^d \rightarrow \R^N$. When the random feature map is low-dimensional, i.e. when $N$ is not large, then the approximation is tractable for large-scale datasets.  This is the method used for random feature kernel regression.

An alternative perspective is to view the RFM as a nonlinear randomized function approximation. From the neural network point of view, an RFM is a two-layer network with a randomized but fixed single hidden layer \cite{rahimi2007random, rahimi2008uniform, rahimi2008weighted}.  Given an unknown function $f:\R^d \rightarrow\C$, the RFM takes the form $y=\c^T\phi(\W^T\x) \in \C$ where $\x\in \R^{d}$ is the input data, $\W\in \R^{d \times N}$ is a random weight matrix, $\phi$ is a non-linear function applied element-wise and $\c\in\C^{N}$ is the final weight layer. We take as an assumption that the entries of the matrix $\W=[{\boldsymbol{\omega}}_{j,k}]$ are independent and identically distributed (i.i.d.) random variables generated by the (user defined) probability density function $\rho(\boldsymbol{\omega})$ i.e.  ${\boldsymbol{\omega}}_{j,k}\sim\rho(\boldsymbol{\omega})$ for all $1\leq j \leq d$ and $1 \leq k \leq N$. For an RFM, the output layer $\c$ is trained, while the hidden layer $\W$ is fixed. Thus, given a collection of $m$ measurements, whose inputs (and outputs) are arranged column-wise in the matrix $\X\in \R^{d\times m}$ (and $\Y\in \C^{1 \times m}$), the random feature regression problem becomes training $\c$ by optimizing: 
\begin{equation*}
\min_{\c\in\C^{N}}  \ \|\Y-\c^T\phi(\W^T\X)\|_{\ell^2}^2 + \mathcal{R}(\c)
\end{equation*}
with some penalty function $\mathcal{R}:\C^{N} \rightarrow \R$.

The most common choice for $\mathcal{R}$ when using RFM is the ridge penalty $\mathcal{R}(\c) = \lambda_m \, \|\c\|^2_{\ell^2}$ which leads to the random feature ridge regression (RFRR) problem \cite{rahimi2008weighted, rudi2017generalization,li2019towards,weinan2020towards,mei2021generalization}. In \cite{rudi2017generalization}, it was shown that for a function $f$ in an RKHS, using $N=\mathcal{O}({m}^{\frac{1}{2}} \, \log{m})$ number of random features is sufficient to achieve a test error of $\mathcal{O}(m^{-\frac{1}{2}})$ when training the output layer of the RFRR problem. In general, to achieve a risk bound that scales like $\mathcal{O}\left(N^{-1} + m^{-\frac{1}{2}}\right)$ using the RFRR problem over the RKHS,
properties of the spectrum of the kernel operator must be known \cite{weinan2020towards, bach2017equivalence}. In practice, the technical assumptions on the spectrum and its decay may be hard to verify for a given problem or dataset.

When the ridge parameter $\lambda_m\rightarrow 0^+$, the RFRR becomes the min-norm interpolation problem (also referred to as ridge-less regression) and has received recent attention in several different settings \cite{belkin2019reconciling, mei2019generalization, belkin2019does, bartlett2020benign, hastie2019surprises, tsigler2020benign, liang2020just, mei2021generalization}. A detailed analysis of both the kernel ridge regression and the RFRR problems in terms of the dimensional parameters $d$, $m$, and $N$ is provided in \cite{mei2021generalization, chen2021conditioning}, where one conclusion indicates that the min-norm interpolators are the optimal approximations among all kernel methods when $N\gg m$, i.e. in the overparameterized setting. However, noise, outliers, and model misfits often necessitate the use of a penalty in the regression problem.

Since the risk bounds scale is like $N^{-1}$, in order to have a small error, one must choose a large number of features. In addition, it has been observed that the global minimizer of the risk using the RFRR problem as a function of the ratio \,$\frac{N}{m}$\, is achieved for values $\frac{N}{m}\gg 1$  \cite{mei2019generalization, mei2021generalization}, that is, the lower risk solutions occur in the very overparameterized limit. While the risk analysis indicates that large $N$ is needed, the complexity of evaluating an RFM in the overpameterized setting scales linearly with the number of features $N$, but large $N$ can limit the usefulness for  scientific problems or many query applications.  Thus, an alternative approach is to use sparsity-promoting penalties (or algorithms) to obtain sparse or low-complexity models in the overparameterized setting. 

In \cite{hashemi2021generalization}, an $\ell^1$ basis pursuit denoising problem was used to train RFM from limited (and noisy) measurements. Similar to the RFRR problem, it was shown that when the sparsity level $s=N$ and the number of measurements $m= \mathcal{O}(N \log(N))$, the risk is bounded by $\mathcal{O}\left(N^{-1} + m^{-\frac{1}{2}}\right)$ \cite{chen2021conditioning}. When the ``true'' values of the final weight layer $\c$ are compressible, then $s$ can be much smaller than $N$ to achieve similar bounds \cite{hashemi2021generalization,chen2021conditioning}. In \cite{yen2014sparse}, a related algorithm based on the LASSO problem was used to iteratively add a sparse number of random features to the trained RFM. While the $\ell^1$ based approaches lead to good generalization bounds and other theoretical properties, their optimization often comes at a higher computational cost than the ridge regression problem. In \cite{xie2021shrimp}, an iterative pruning approach (related to an $\ell^0$ penalized problem) is shown to perform well on several synthetic and real data examples, in particular, for high-dimensional approximation problems with the inherent low-order structure. They connect the sparsity-promoting methods for RFM to the pruning approaches for reducing the model complexity of overparameterized neural networks. Pruning algorithms focus on obtaining small subnetworks with similar accuracy to the full neural network  \cite{frankle2018lottery, zhou2019deconstructing} and are based on the lotto ticket hypothesis, where the existence of smaller subnetworks with similar (or better) risk is conjectured \cite{frankle2018lottery}. More precisely, the lottery ticket hypothesis said that any randomly initialized, dense neural network (i.e., a random feature network) contains a subnetwork that can match the test accuracy of the original network after training for at most the same number of training iterations. On the other hand, the desired sparsity obtained from the hard thresholding step in our algorithm is essentially a pruning step in order to obtain a subset of features that could lead to better generalization results with a smaller RFM. In this way, we think of thresholding methods as one possible approach to solving the lottery ticket hypothesis.

In this work, we propose the HARFE method, which is a hard ridge-based thresholding algorithm to iteratively obtain a small sub-network for a RFM using a sparsity prior. In addition, we use the sparse random features introduced \cite{hashemi2021generalization}, which restrict each column of the weight matrix $\W$ to have a fixed number of non-zero terms (often only a few non-zero components). This is beneficial when the number of interacting or active variables is low but the ambient dimension is high, see  \cite{harris2019additive, potts2019approximation, potts2021interpretable}. In particular, for sparse additive modeling and decompositions, the sparse random features represent possible interactions between input variables. Additive models for kernel regression and sparse additive models include the multiple kernel learning algorithms \cite{gonen2011multiple, bach2008consistency, xu2010simple}, shrunk additive least squares approximation (SALSA) \cite{ kandasamy2016additive}, sparse shrunk additive models (SSAM) \cite{liu2020sparse}, component selection and smoothing operator (COSSO) \cite{lin2006component},
  additive model with kernel regularization (KAM) \cite{christmann2016learning}, and other related works \cite{potts2021interpretable, tyagi2016learning}.

\subsection{Contributions}
Our main algorithmic and modeling contributions are as follows:
\begin{itemize}
\item We propose the hard-ridge random feature expansion method (HARFE)\footnote{The code is available at \url{https://github.com/esha-saha/HARFE}.}, which is used to train sparse additive random feature models. This method is related to the lotto ticket hypothesis since the desired sparsity obtained from the hard thresholding step in our algorithm is essentially a pruning step in order to obtain a subset of features that could lead to better generalization results with a smaller RFM, see also \cite{frankle2018lottery, zhou2019deconstructing}. 
\item The sparse ridge regression (SRR) problem and some associated (scalable) algorithms have been studied in \cite{luedtke2014branch, mazumder2017subset, bertsimas2020sparse, hazimeh2020fast, xie2020scalable}. The HARFE algorithm is one way to solve the SRR problem and has guarantees on the convergence when the matrix has standard compressive sensing structures. Specifically, the random feature matrix has a coherence bound \cite{hashemi2021generalization} and the restricted isometry property \cite{chen2021conditioning}, thus convergence is given by the results in \cite{iht, foucart2011hard,htp2, maleki2009coherence}. We also obtained a generalization bound for our approach based on the proofs from \cite{hashemi2021generalization, chen2021conditioning}.
\item Tests on synthetic and real-world datasets validate the proposed method, including the inclusion of sparse random features (i.e. for sparse additive random feature modeling). The method performs comparably or better than other related algorithms as shown in the experiments. In addition, the sparsity priors help to obtain important variable dependencies from the data. 
\end{itemize}

\section{Problem Statement and Algorithm}

Throughout the paper, we use bold letters for column vectors (e.g. $\mathbf{x}$) and bold capital letters for matrices (e.g. $\mathbf{A}$). We say that a vector $\mathbf{z}$ is $s$-sparse if it has at most $s$ nonzero entries.  Let $[N]$ denote the set of all positive integers less than or equal to $N$. We denote the $\ell^p$ norm of  a vector $\mathbf{z}$ by $||\mathbf{z}||_p$. Next, we recall a useful definition.

\begin{definition}[\textbf{Order-$q$ Additive Functions} \cite{hashemi2021generalization}]
\label{def:function_class} Fix $d,q,K\in\mathbb{N}$ with $1\leq q\leq d$. A function $f: \mathbb{R}^d \rightarrow \mathbb{C}$ is called an order-$q$ additive function with at most $K$ terms if there exist $K$ complex-valued functions $g_1, \dots, g_K:\R^q\rightarrow\C$ such that 
\begin{align}\label{eq:order-q-def}
f(\x)  =\frac{1}{K}\sum_{j=1}^K g_j(\x|_{\S_j}),
\end{align}
where for each $j\in [K]$, $\S_j\subseteq  [d]$, $\S_j$ has $q$ distinct indices, and $\S_j \neq \S_{j'}$ for $j\not= j'$. Here $\x|_{\S_j}$ denotes the restriction of $\x\in\R^d$ onto $\S_j$. 
\end{definition}
Note that the class of additive functions in Definition \ref{def:function_class} is related to sparse additive modeling and multiple kernel learning \cite{harris2019additive, potts2019approximation, potts2021interpretable, gonen2011multiple, bach2008consistency, xu2010simple, kandasamy2016additive, liu2020sparse, lin2006component, christmann2016learning}.

We are interested in approximating an unknown high dimensional function $f: \R^d\rightarrow \C, d\gg 1,$ from a set of $m$ samples $\{(\mathbf{x}_k,y_k)\}_{k=1}^m$ where the inputs $\x_k$ are drawn from an (unknown) probability measure $\mu(\mathbf{x})$ and the output data is (likely) corrupted by noise:
\begin{equation}
    y_k = f(\mathbf{x}_k) + e_k,\quad \ \text{for}\ k\in [m].
\end{equation}
We assume that the noise $e_k$ is a Gaussian random variable or bounded by some constant $E$, that is,  $|e_k|\leq E \ \forall k \in [m]$. In addition, we assume that the target function $f$ is an order-$q$ additive function with $q\ll d$, $K\ll {d\choose q}$, and that we do not have prior knowledge on the terms $g_j$.
Using the random feature method \cite{rahimi2007random, hashemi2021generalization}, we approximate the target function $f$ by:
\begin{equation*}
f(\mathbf{x}) \approx f^{\#}(\mathbf{x})=\mathbf{c}^T\phi(\W^T\mathbf{x}) =  \sum\limits_{j=1}^N c_j \phi(\langle \x,\boldsymbol{\omega}_j\rangle),
\end{equation*}
where $\mathbf{W}=[\boldsymbol{\omega}_{k,j}]\in \R^{d\times N}$ is a random weight matrix, $\boldsymbol{\omega}_j\in\R^d$ are the column vectors of the matrix $\mathbf{W}$, and $\c\in\C^N$ is the coefficient vector. The random weight matrix $\mathbf{W}\in \R^{d\times N}$ is fixed, while the coefficients $\c\in\C^N$ are trainable. The function $\phi: \R\rightarrow\R$ is the nonlinear activation function and can be chosen to be a trigonometric function, the sigmoid function, or the ReLU function. Unless otherwise stated, we use the sine activation function, i.e. $\phi(\cdot)=\sin(\cdot)$.
This model is a two-layer neural network with the weights in the hidden layer being randomized but not trainable and thus the training problem relies on learning the coefficient vector $\c$. Theoretically, the random feature method has been shown to be comparable with shallow networks in terms of theoretical risk bounds \cite{rahimi2007random, rahimi2008weighted, rahimi2008uniform, rudi2017generalization} where the population risk is defined as
\begin{equation}
    R(f^\#) := ||f-f^\#||_{L^2(d\mu)}^2 = \int\limits_{\mathbb{R}^d}|f(\x)-f^\#(\x)|^2d\mu(\x)
\end{equation}
which is also the $L^2$ squared error between the
true function and its approximation.
Suppose the entries of the random weight matrix $\W$  are i.i.d. random variables generated by the (one-dimensional) probability function $\rho(\boldsymbol{\omega})$, $\boldsymbol{\omega}_{k,j}\sim \rho(\boldsymbol{\omega})$. Let $\mathbf{A}\in\C^{m\times N}$ be the random feature matrix whose entries are defined as $a_{k,j} = \phi(\langle \x_k, \mathbf{\boldsymbol{\omega}}_j\rangle)$. Then the general regression problem is to solve the following optimization problem:
\begin{equation}
\min\limits_{\mathbf{c}\in\mathbb{C}^N}||\A\mathbf{c}-\mathbf{y}||_2^2,
\end{equation}
where $\mathbf{y} = [y_1,y_2,...,y_m]^T$.

For sparse additive modeling, since $f$ is an order-$q$ function, each entry of the random feature matrix $\mathbf{A}$ should only depend on $q$ entries of the input data $\x_k$. Therefore, we can instead generate a sparse random matrix $\mathbf{W}$ where each column of $\mathbf{W}$ has at most $q$ nonzero entries following the probability function $\rho(\boldsymbol{\omega})$ \cite{hashemi2021generalization}. One such way to generate $\mathbf{W}$ is to first generate $N$ random vectors $\mathbf{v}^{(j)}=(v_{1}^{(j)}, \ldots, v_{q}^{(j)})^T$ in $\R^q$ and use a random embedding that assigns $\mathbf{v}^{(j)}$ to $\boldsymbol{\omega}_j$, where $\boldsymbol{\omega}_j =(0,0,\ldots, 0, v_{1}^{(j)},0,\ldots, v_{2}^{(j)},0,\ldots, v_{q}^{(j)},0,\ldots ,0)^T$. In particular, for each $j$, we select a subset of $q$ indices from {$[d]$} uniformly at random and then sample each nonzero entry using $\rho(\boldsymbol{\omega})$. The sparse random matrix $\mathbf{W}$ can also be obtained as $\mathbf{W} = \widetilde{\mathbf{W}}\odot\mathbf{M}$, where $\widetilde{\mathbf{W}}\in\mathbb{R}^{d\times N}$ is a dense matrix whose entries are sampled from $\rho(\boldsymbol{\omega})$, the mask $\mathbf{M}\in\mathbb{R}^{d\times N}$ is a sparse matrix whose non-zero entries are one and each column of $\mathbf{M}$ has $q$ non-zero entries, and $\odot$ denotes the element-wise multiplication. Using this formulation, the general sparse regression problem becomes
\begin{equation}
    \text{find } \mathbf{c}\in \C^N\  \text{such that } ||\A\mathbf{c} - \mathbf{y}||_2\leq\epsilon\sqrt{m} \quad \text{and}\quad \c \quad \text{is sparse},
\end{equation}
where $\epsilon$ is the parameter related to the noise level. The
motivation for sparsity in $\c$ is due to the assumption that  $K\ll {d\choose q}$, thus not all index subsets are needed. 

 In order to solve the sparse random feature regression problem, we propose a new greedy algorithm named hard-ridge random feature expansion (HARFE), which uses a hard thresholding pursuit (HTP) like algorithm to solve the random feature ridge regression problem. Specifically, we learn $\c$ from the following minimization problem:
\begin{equation}\label{eqn:htp-r}
    \min\limits_{\mathbf{c}\in\C^N} \|\A\mathbf{c} - \mathbf{y}\|_2^2 + m \lambda ||\mathbf{c}||_2^2\quad \text{such that\,}\,\,\mathbf{c}\,\,\text{is $s$-sparse},
\end{equation}
where $\lambda>0$ is the regularization parameter. Equation \eqref{eqn:htp-r} can be rewritten as
\begin{equation}\label{eqn:modifiedHTP}
\min\limits_{\textbf{c}\in\C^N}\|\mathbf{B}\mathbf{c} - \tilde{\mathbf{y}}||_2^2 \quad \text{such that\,}\,\, \mathbf{c} \,\,\text{\,is $s$-sparse},
\end{equation}
where
$\mathbf{B} = 
\begin{bmatrix}
\A\\
\sqrt{m\lambda}\mathbf{I}_N
\end{bmatrix}\in \C^{(m+N)\times N}$ 
and $\tilde{\mathbf{y}} =
\begin{bmatrix}
\mathbf{y}\\
\mathbf{0}
\end{bmatrix}\in\C^{m+N}$. To solve \eqref{eqn:modifiedHTP}, we first start with an $s$-sparse vector $\mathbf{c}^0\in \C^N$ (typically taken as $\mathbf{c}^0 = 0$) and update based on the HTP approach
\begin{equation}
\begin{split}
    & S^{n+1} = \{ \text{indices of }s\  \text{largest (in magnitude) entries of }\mathbf{c}^n + \mu \mathbf{B}^*(\mathbf{y} - \mathbf{B}\mathbf{c}^n)\}\\
     & \mathbf{c}^{n+1} = \text{argmin}\{\|\tilde{\mathbf{y}} - \mathbf{B}\mathbf{c}\|_2^2,\quad  \text{supp}(\mathbf{c})\subseteq S^{n+1}\}
     \end{split}\label{HTP2}
\end{equation}
where $\mu>0$ is the step-size and $s$ is a user defined parameter. The idea is to solve for the coefficients using a much smaller number of model terms. The subset $S$ given by the indices of the $s$ largest entries of one gradient descent step applied on the vector $\mathbf{c}$ is a good candidate for the support set of $\mathbf{c}$. The HTP algorithm iterates between these two steps and leads to a stable and robust reconstruction of sparse vectors depending on the RIP constant. The Gram matrix $\mathbf{B}^*\mathbf{B}$ is computed directly based on the ridge problem
\[
\mathbf{c}^n + \mu \mathbf{B}^*(\tilde{\mathbf{y}} - \mathbf{B}\mathbf{c}^n) =(1-m\mu\lambda)\mathbf{c}^n + \mu \A^*(\mathbf{y} - \A\mathbf{c}^n).
\]
The approach is summarized in Algorithm \ref{alg:cap}. The relative residual at the iterate $\mathbf{c}^n$ is  defined as  \[\text{Relative\ Residual} = \dfrac{||\A\c^n - \y||_2}{||\y||_2}.\]
\begin{figure}[t!]
    \centering
    \includegraphics[scale =0.6]{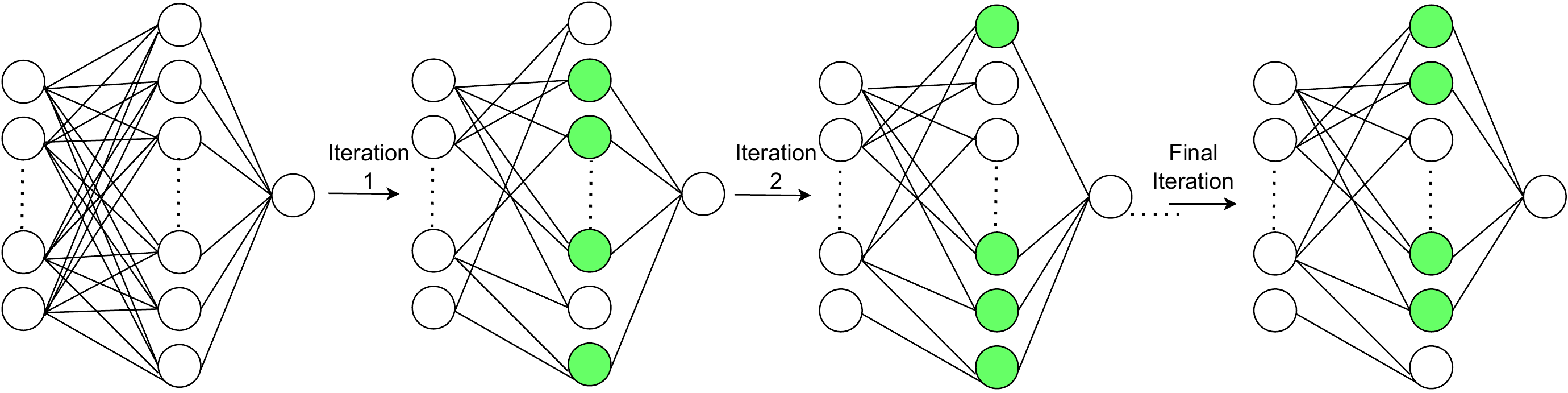}
    \caption{Schematic representation of HARFE. The active nodes at each iteration is given in green.}
    \label{fig:schematic}
\end{figure}

\begin{algorithm}[t!]
\caption{Hard-Ridge Random Feature Expansion (HARFE)}\label{alg:cap}
\begin{algorithmic}[t]
\Require Samples $\{\mathbf{x}_k,y_k\}_{k=1}^m$, non-linear function $\phi$, number of features $N$, sparsity level $s$, {random weight sparsity $q$}, step size $\mu$, regularization parameter $\lambda$, convergence threshold $\epsilon$ and total number of iterations \texttt{tot\_iter}.
\State Draw ($q$-sparse) $N$ random weights $\boldsymbol{\omega}_j$ whose non-zero entries are sampled from $\rho(\boldsymbol{\omega})$. 
\State Construct the random feature matrix $\mathbf{A}= [\phi(\langle\mathbf{x}_k;\boldsymbol{\omega}_j\rangle)]\in\C^{m\times N}$.

\Ensure 

\State {\bf Initialization:} Start with $s$-sparse $\mathbf{c}^0\in \C^N$ ($\mathbf{c}^0 = 0$), $n=0$
\While{(Relative Residual$>\epsilon)$ or \texttt{(n<tot\_iter)}}
    \State $\Tilde{\mathbf{c}}^{n+1} \gets (1-m\mu\lambda)\mathbf{c}^n + \mu \A^*(\mathbf{y} - \A\mathbf{c}^n)$
    \State $\text{idx} \gets \text{indices of } s \text{ largest entries of } \Tilde{\mathbf{c}}^{n+1} $  \Comment{Choose the subset of features $S^{n+1}$}
    \State $\Bar{\A} \gets \A[:,\text{idx}]$
    \State $\mathbf{c}^{n+1}[[N]\setminus\text{idx}] =0$
   \State $\mathbf{c}^{n+1}[\text{idx}] \gets \text{argmin}\{||\mathbf{y} - \bar{\A}\mathbf{c}||_2^2 + m\lambda||\mathbf{c}||_2^2\} = (\bar{\A}^* \bar{\A} + m\lambda \mathbf{I}_s)^{-1} \bar{\A}^*\mathbf{y}$
    
     \State $n=n+1$

\EndWhile\\
\Return Sparse vector $\c = [c_1,c_2,...,c_N]$ such that:    $f^{\sharp}(\mathbf{x}) =\sum\limits_{j=1}^N c_j\phi(\mathbf{x};\boldsymbol{\omega}_j)$.

\end{algorithmic}

\end{algorithm}

One of the motivations for including the ridge penalty is that for random feature regression, the sparsity level $s$ can be large and thus the least squares step in \eqref{HTP2} may be ill-conditioned. Numerically, we observed that even a small non-zero value of $\lambda$ can be beneficial for ensuring convergence and good generalization. In Figure~\ref{fig:schematic}, we provide a schematic representation of the sequence generated by the algorithm, namely, over each step a sparse subset of nodes is obtained until a final configuration is achieved.

\section{Theoretical Discussion}

The error produced by the HARFE algorithm can be established by extending the results on the HTP algorithm to include the ridge regression term and by leveraging bounds on the restricted isometry constant for this type of random feature matrix. Recall that given an integer $s \in [N]$, the $s$-th restricted isometry constant of a matrix $\mathbf{A}\in \mathbb{C}^{m\times N}$, denoted by $\delta_s(\mathbf{A})$,  is the smallest non-negative $\delta$ such that
\begin{align*}
(1-\delta)\|\mathbf{x}\|_2^2 \leq \|\mathbf{A}\mathbf{x}\|_2^2 \leq (1+\delta)\|\mathbf{x}\|_2^2
\end{align*}
holds for all $s$-sparse $\mathbf{x} \in \mathbb{C}^N$ \cite{foucart2013mathematical}. Let $\kappa_{1,s}(\mathbf{x})$ denote the $\ell^1$ distance to the best $s$-term approximation of $\mathbf{x}$ defined by
\begin{align*}
\kappa_{1,s}(\mathbf{x})= \inf  \left\{\|\mathbf{z}-\mathbf{x}\|_1: {\mathbf{z}}\in \C^n,\, \mathbf{z} \ \text{is} \ s-\text{sparse}\right\}.
\end{align*}
The value $\kappa_{1,s}(\mathbf{x})$ provides a measure for the compressibility of the vector $\mathbf{x}$ with respect to the $\ell^1$ norm and is obtained by setting all but the $s$-largest in magnitude entries to zero. 

The following result is restricted to the case when $q=d$ and $\mu = 1$. The $q\leq d$ case follows from similar arguments \cite{hashemi2021generalization}. From \cite{foucart2013mathematical}, the theorem below can be extended trivially for any step size $\mu$ by scaling the matrix $\mathbf{A}$ and the vector $\mathbf{y}$ with the ratio $ \sqrt{\mu}$.

\begin{theorem}[Convergence of the iterates of HARFE]\label{thm:convergence}
Let the data $\{\mathbf{x}_k\}_{k\in [m]}$ be drawn from $\mathcal{N}(\mathbf{0},\gamma^2 \mathbf{I}_d)$, the weights $\{\boldsymbol{\omega}_j\}_{j\in [N]}$ be drawn from $\mathcal{N}(\mathbf{0},\sigma^2 \mathbf{I}_d)$, and the random feature matrix $\A\in\mathbb{C}^{m\times N}$ be defined component-wise by $a_{k,j} = \exp(i \langle \mathbf{x}_k,{\boldsymbol{\omega}}_j \rangle)$. Denote the $\ell_2$-regularization parameter by $\lambda$ and the sparsity level by $s$. If 
\begin{align*}
    m&\geq C_1\, (1+\lambda)^{-2}\, s\, \log(\delta^{-1}), \\
    \frac{m}{\log(3m)} &\geq C_2 \, (1+\lambda)^{-1}\, s \,\log^2(6s) \log\left(\frac{N}{9\log(2m)}+3\right), \\
     \frac{\sqrt{\delta}}{6\sqrt{3}}&\, (1+\lambda)\,  (4\gamma^2\sigma^2+1)^{\frac{d}{4}}\geq N,
\end{align*}
where $C_1$ and $C_2$ are universal positive constants, then with probability at least $1-2\delta$, for all $\mathbf{c}\in \mathbb{C}^N$ and $\mathbf{e}\in \mathbb{C}^m$ with $\mathbf{y}=\A\mathbf{c}+\mathbf{e}$, the sequence $\mathbf{c}^{n}$ defined by the Algorithm~\ref{alg:cap} with $\mathbf{c}^0=\mathbf{0}$, using $2s$ instead of $s$ in the algorithm, satisfies
\begin{align*}
\| \mathbf{c}^{n}-\mathbf{c}\|_2 &\leq 2\beta^n  \|\mathbf{c}\|_2 + \frac{D_1}{\sqrt{s}} \kappa_{1,s}(\mathbf{c})+D_2 \, \sqrt{\frac{m^{-1}\|\mathbf{y}-\A\mathbf{c}\|^2_2+\lambda \|\mathbf{c}\|^2_2}{1+ \lambda}}
\end{align*}
for all $n\geq 0$ where the constants $\beta \in(0,1), D_1,D_2>0$ depend only on $\delta_{6s}(\B)$. The matrix $\B$ is given by $\B=\left(m+ m\lambda \right)^{-\frac{1}{2}}
\begin{bmatrix}
\A\\
\sqrt{m\lambda}I_N
\end{bmatrix}\in \mathbb{C}^{(m+N)\times N}$.
\end{theorem}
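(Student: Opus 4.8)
The plan is to observe that Algorithm~\ref{alg:cap} is nothing but the hard thresholding pursuit (HTP) iteration for the sparse least-squares problem $\min_{\mathbf{c}}\|\B\mathbf{c}-\tilde{\mathbf{y}}\|_2^2$ run with sparsity budget $2s$, where $\B$ is the normalized augmented matrix of the statement and $\tilde{\mathbf{y}}=(m+m\lambda)^{-1/2}[\mathbf{y};\mathbf{0}]$. Indeed, with $\mu=1$ the update $(1-m\lambda)\mathbf{c}^n+\A^*(\mathbf{y}-\A\mathbf{c}^n)$ equals $\mathbf{c}^n+(m+m\lambda)\,\B^*(\tilde{\mathbf{y}}-\B\mathbf{c}^n)$, which is the HTP gradient step up to the fixed positive rescaling absorbed into the step-size normalization discussed right before the theorem (the general $\mu$ and the $q\le d$ case being reduced to this one as in \cite{hashemi2021generalization}), while the inner solve $(\bar{\A}^*\bar{\A}+m\lambda\mathbf{I}_s)^{-1}\bar{\A}^*\mathbf{y}$ is exactly $\text{argmin}\{\|\tilde{\mathbf{y}}-\B\mathbf{c}\|_2^2:\ \text{supp}(\mathbf{c})\subseteq S^{n+1}\}$ since the constant $(m+m\lambda)^{-1}$ does not affect that minimizer. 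I would then invoke the classical stability theorem for HTP (\cite{foucart2011hard,htp2,foucart2013mathematical}): there is a universal threshold $\delta^\star$ such that if $\delta_{6s}(\B)<\delta^\star$ then, for every $\mathbf{c}\in\C^N$ and with $\mathbf{r}:=\tilde{\mathbf{y}}-\B\mathbf{c}$, the iterates started at $\mathbf{c}^0=\mathbf{0}$ obey $\|\mathbf{c}^n-\mathbf{c}\|_2\le 2\beta^n\|\mathbf{c}\|_2+\tfrac{D_1}{\sqrt{s}}\kappa_{1,s}(\mathbf{c})+D_2\|\mathbf{r}\|_2$ with $\beta\in(0,1)$ and $D_1,D_2>0$ depending only on $\delta_{6s}(\B)$; running the algorithm with the enlarged budget $2s$ is the standard device for recovering a merely $s$-compressible $\mathbf{c}$ with clean constants, and it is what produces the factor $2$ and the index $6s$.

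The \emph{key algebraic point} is that the ridge augmentation can only shrink the restricted isometry constant. From $\B^*\B=(1+\lambda)^{-1}\bigl(m^{-1}\A^*\A+\lambda\mathbf{I}_N\bigr)$, every $s$-sparse $\mathbf{x}$ satisfies $\|\B\mathbf{x}\|_2^2=(1+\lambda)^{-1}\bigl(m^{-1}\|\A\mathbf{x}\|_2^2+\lambda\|\mathbf{x}\|_2^2\bigr)$, so if $\delta':=\delta_{6s}(m^{-1/2}\A)$ then $\bigl(1-\tfrac{\delta'}{1+\lambda}\bigr)\|\mathbf{x}\|_2^2\le\|\B\mathbf{x}\|_2^2\le\bigl(1+\tfrac{\delta'}{1+\lambda}\bigr)\|\mathbf{x}\|_2^2$, i.e. $\delta_{6s}(\B)=(1+\lambda)^{-1}\delta_{6s}(m^{-1/2}\A)$. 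This explains the $(1+\lambda)^{-1}$ and $(1+\lambda)^{-2}$ factors in the hypotheses: it suffices to force $\delta_{6s}(m^{-1/2}\A)$ below $(1+\lambda)\delta^\star$. The same identity rewrites the residual, $\|\mathbf{r}\|_2^2=(1+\lambda)^{-1}\bigl(m^{-1}\|\mathbf{y}-\A\mathbf{c}\|_2^2+\lambda\|\mathbf{c}\|_2^2\bigr)$, which is exactly the quantity under the square root in the claimed bound, and $\mathbf{c}^0=\mathbf{0}$ gives $\|\mathbf{c}^0-\mathbf{c}\|_2=\|\mathbf{c}\|_2$; substituting these into the HTP bound yields the stated inequality, modulo the probabilistic hypothesis on $\delta_{6s}(\B)$.

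It remains to establish that the three displayed conditions force $\delta_{6s}(m^{-1/2}\A)$ to be small with probability at least $1-2\delta$. Here I would quote the restricted-isometry estimate for random feature matrices with complex-exponential features $a_{k,j}=\exp(i\langle\mathbf{x}_k,\boldsymbol{\omega}_j\rangle)$ from \cite{chen2021conditioning} (built on the coherence and concentration bounds of \cite{hashemi2021generalization}): for $\mathbf{x}_k\sim\mathcal{N}(\mathbf{0},\gamma^2\mathbf{I}_d)$ and $\boldsymbol{\omega}_j\sim\mathcal{N}(\mathbf{0},\sigma^2\mathbf{I}_d)$, the sampling conditions $m\gtrsim s\log(\delta^{-1})$ and $m/\log(3m)\gtrsim s\log^2(6s)\log\!\bigl(\tfrac{N}{9\log(2m)}+3\bigr)$ together with the feature-count restriction $N\lesssim\sqrt{\delta}\,(4\gamma^2\sigma^2+1)^{d/4}$ guarantee $\delta_{6s}(m^{-1/2}\A)\le c_0$ for a universal $c_0$ on an event of probability at least $1-2\delta$ (one $\delta$ for the subgaussian concentration giving the near-isometry on sparse supports, one $\delta$ for the near-orthogonality of the $N$ columns, which fails if $N$ is too large because the entries $a_{k,j}$ have nonzero mean of order $(1+\gamma^2\sigma^2)^{-d/2}$). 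Reinstating the factor $(1+\lambda)^{-1}$ then makes $\delta_{6s}(\B)$ smaller than the prescribed $\delta^\star$, so the three hypotheses as written are exactly calibrated, and combining with the HTP stability bound and the residual identity finishes the proof. I expect the main obstacle to be precisely this calibration step: tracking the correct universal constants through the estimate of \cite{chen2021conditioning} and propagating the $(1+\lambda)$-rescaling simultaneously through the RIP constant and the threshold it must beat, together with the bookkeeping (the $2s$ budget, the $\kappa_{1,s}$ tail, and the initialization at $\mathbf{0}$) that turns the raw HTP recovery guarantee into the stated form; the HTP iteration analysis itself is classical and is used essentially verbatim.
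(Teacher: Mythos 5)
Your proposal is correct and follows essentially the same route as the paper: reformulate the ridge problem as HTP on the augmented matrix $\B$, show $\delta_{6s}(\B)=(1+\lambda)^{-1}\delta_{6s}(m^{-\frac{1}{2}}\A)$, invoke the random-feature RIP bound of \cite{chen2021conditioning} to get $\delta_{6s}(\B)<\frac{1}{\sqrt{3}}$ under the stated sampling conditions, and then apply the classical HTP stability theorem and rewrite the residual $\|\tilde{\mathbf{e}}\|_2$ in the original variables. The only differences are bookkeeping (you keep $\mathbf{c}$ unscaled and normalize $\tilde{\mathbf{y}}$, whereas the paper rescales $\mathbf{z}'=\sqrt{m+m\lambda}\,\mathbf{c}$), and the calibration step you flag as the main obstacle is handled in the paper simply by choosing $\eta_1=\frac{1+\lambda}{6\sqrt{3}}$ and $\eta_2=\frac{\sqrt{1+\lambda}}{4\sqrt{3}}$ in the RIP theorem.
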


\label{sec:proof}
\begin{proof}
The ridge regression problem:
\begin{equation} \label{eqn:regression_prime}
\min\limits_{\textbf{c}\in\mathbb{R}^N} \|\A\mathbf{c} - \mathbf{y}\|_2^2+ m\lambda \|\mathbf{c}\|_2^2
\end{equation}
can be written in the form:
\begin{equation} \label{eqn:regression_rescaled}
\min\limits_{\textbf{z}'\in\mathbb{R}^N}\|\B\mathbf{z'} - \tilde{\mathbf{y}}\|_2^2 
\end{equation}
where
$\B = \left(m+ m\lambda \right)^{-\frac{1}{2}}
\begin{bmatrix}
\A\\
\sqrt{m\lambda}I_N
\end{bmatrix}\in \mathbb{C}^{(m+N)\times N}$ 
and $\tilde{\mathbf{y}} =
\begin{bmatrix}
\mathbf{y}\\
\mathbf{0}
\end{bmatrix}\in\mathbb{C}^{m+N}$. For equation \eqref{eqn:regression_rescaled}, the error is defined as $\tilde{\mathbf{e}}=\tilde{\mathbf{y}}-\B\textcolor{green}{\mathbf{z}'}$ which is equivalent to $$\tilde{\mathbf{e}} =
\begin{bmatrix}
\mathbf{e}\\
-\sqrt{m\lambda}\, \mathbf{c}
\end{bmatrix}\in\mathbb{C}^{m+N}.$$
If $\textbf{c}$ is the minimizer of \eqref{eqn:regression_prime} and $\textbf{z}'$ is  the minimizer of \eqref{eqn:regression_rescaled}, then $\textbf{z}'= \left(m+ m\lambda \right)^{\frac{1}{2}} \textbf{c}$. The scaling is needed for the matrix to satisfy a restricted isometry property. 

To estimate the restricted isometry constant of $\B$, we first bound the restricted isometry constant of $m^{-\frac{1}{2}}\A$. By Theorem~\ref{thm:RIP} and the assumptions, if 
\begin{align*}
    m&\geq 6C_1 \eta_1^{-2}\, s\, \log(\delta^{-1}) \\
    \frac{m}{\log(3m)} &\geq 6C_2 \eta_2^{-2}\, s \,\log^2(6s) \log\left(\frac{N}{9\log(2m)}+3\right) \\
    \sqrt{\delta}&\,\eta_1\, (4\gamma^2\sigma^2+1)^{\frac{d}{4}}\geq N,
\end{align*}
where $C_1$ and $C_2$ are universal positive constants, then with probability at least $1-2\delta$, the $6s$-restricted isometry constant is bounded by
$$
\delta_{6s}\left(\A\right)< 3\eta_1 + \eta_2^2 + \sqrt{2}\eta_2,
$$
or equivalently
$$\left\|m^{-1} \A_S^* \A_S - \mathbf{I}_S\right\|_{2\rightarrow 2}  <3\eta_1 + \eta_2^2 + \sqrt{2}\eta_2,$$
for all $S\subset [N]$ with $|S|=6s$. Therefore, the $6s$-restricted isometry constant of $B$ satisfies
\begin{align*}
\left\| \B_S^* \B_S - \mathbf{I}_S\right\|_{2\rightarrow 2}  &=\left\| \left(m+ m\lambda \right)^{-1} \left(\A_S^* \A_S+m \lambda \mathbf{I}_S\right) - \mathbf{I}_S\right\|_{2\rightarrow 2}\\
&=\frac{1}{1+\lambda} \left\| m^{-1} \A_S^* \A_S - \mathbf{I}_S\right\|_{2\rightarrow 2}\\
&<\frac{3\eta_1 + \eta_2^2 + \sqrt{2}\eta_2}{1+\lambda}.
\end{align*}
Setting the parameters to $\eta_1=\frac{1+\lambda}{6\sqrt{3}}$ and $\eta_2=\frac{\sqrt{1+\lambda}}{4\sqrt{3}}$, then
$\left\| \B_S^* \B_S - \mathbf{I}_S\right\|_{2\rightarrow 2}<\frac{1}{\sqrt{3}}$
if 
\begin{align*}
    m&\geq 648C_1\, (1+\lambda)^{-2}\, s\, \log(\delta^{-1}) \\
    \frac{m}{\log(3m)} &\geq 288C_2 \, (1+\lambda)^{-1}\, s \,\log^2(6s) \log\left(\frac{N}{9\log(2m)}+3\right) \\
     \frac{\sqrt{\delta}}{6\sqrt{3}}&\, (1+\lambda)\,  (4\gamma^2\sigma^2+1)^{\frac{d}{4}}\geq N.
\end{align*} 
By Equation (\ref{eq:convhtpseq}) of Theorem~\ref{thm:convergenceHTP}, the sequence generated by the hard thresholding pursuit algorithm produces a solution with the following bound
\begin{align}
\| \mathbf{z'}^{n}-\mathbf{z'}\|_2 &\leq 2\beta^n \|\mathbf{z'}\|_2 + \frac{C}{\sqrt{s}} \kappa_{1,s}(\mathbf{z'})+D \|\tilde{\mathbf{e}}\|_2,
\end{align}

for all $n\geq 0$ where the constants $\beta \in(0,1), C,D>0$ depend only on $\delta_{6s}(\B)$. Transforming the variables to the original variables yields the following bound
\begin{align}
\| \mathbf{c}^{n}-\mathbf{c}\|_2 &\leq 2\beta^n \|\mathbf{c}\|_2 + \frac{C}{\sqrt{s}} \kappa_{1,s}(\mathbf{c})+D\,  \left(1+ \lambda \right)^{-\frac{1}{2}} \, m^{-\frac{1}{2}} \, \|\tilde{\mathbf{e}}\|_2\\
&\leq 2\beta^n \|\mathbf{c}\|_2 + \frac{C}{\sqrt{s}} \kappa_{1,s}(\mathbf{c})+D\,  \left(1+ \lambda \right)^{-\frac{1}{2}} \, m^{-\frac{1}{2}} \, \sqrt{\|\mathbf{e}\|^2_2+m\lambda \|\mathbf{c}\|^2_2}\\
 &\leq 2\beta^n  \|\mathbf{c}\|_2 + \frac{C}{\sqrt{s}} \kappa_{1,s}(\mathbf{c})+D \, \sqrt{\frac{m^{-1}\|\mathbf{e}\|^2_2+\lambda \|\mathbf{c}\|^2_2}{1+ \lambda}},
\end{align}
which concludes the proof.\end{proof}

It is worth noting that, in practice, $\lambda>0$ is small, i.e. we would like $m\lambda = \mathcal{O}(1)$. Thus the third term in the iterative bound is smaller than the second term and does not contribute significantly to the overall error.

\begin{theorem}[Risk bound for HARFE]\label{thm:risk}
Let the data $\{\mathbf{x}_k\}_{k\in [m]}$ be drawn from $\mathcal{N}(\mathbf{0},\gamma^2 \mathbf{I}_d)$, the weights $\{\boldsymbol{\omega}_j\}_{j\in [N]}$ be drawn from $\mathcal{N}(\mathbf{0},\sigma^2 \mathbf{I}_d)$, and the random feature matrix $\A\in\mathbb{C}^{m\times N}$ be defined component-wise by $a_{k,j} = \exp(i \langle \mathbf{x}_k,{\boldsymbol{\omega}}_j \rangle)$. Denote the $\ell_2$ regularization parameter by $\lambda$, accuracy parameter by $\epsilon>0$, and the sparsity level by $s$. If the assumptions of Theorem \ref{thm:convergence} are satisfied, then with probability at least $1-2\delta$, the following risk bounds hold:
\begin{align*}
&R(f^\#) \leq \|f\|_{\rho}\left(\epsilon +  D\sqrt{\dfrac{\lambda}{N}}\left(3^{-\frac{1}{4}} + \left(2m\log\left(\dfrac{1}{\delta}\right)\right)^{\frac{1}{4}}\right)\right)\\  & \ \ +C\sqrt{\dfrac{1+\lambda}{s}}\left(3^{-\frac{1}{4}} + N^{\frac{1}{2}}\left(2m\log\left(\dfrac{1}{\delta}\right)\right)^{\frac{1}{4}}\right)\kappa_{s,1}(\|\c^{\star}\|)
    + DE\left(3^{-\frac{1}{4}}m^{-\frac{1}{2}} + N^{\frac{1}{2}}\left(\dfrac{2}{m}\log\left(\dfrac{1}{\delta}\right)\right)^{\frac{1}{4}}\right).
\end{align*}
\end{theorem}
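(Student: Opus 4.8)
The plan is to bound the $L^2(d\mu)$ error $\|f-f^{\#}\|_{L^2(d\mu)}$ (that is, $\sqrt{R(f^{\#})}$ in the notation of the theorem) by splitting it into a model-approximation (Monte-Carlo) part and a coefficient-estimation part, controlling the latter through Theorem~\ref{thm:convergence}, and then converting $\ell^2$ control of the coefficients into $L^2(d\mu)$ control of the function using that each feature $\x\mapsto\exp(i\langle\x,\boldsymbol{\omega}_j\rangle)$ has modulus one. First I would use the integral representation of $f$ underlying the norm $\|f\|_\rho$ to introduce the Monte-Carlo coefficient vector $\c^{\star}\in\C^N$ of $f$ against the drawn weights $\boldsymbol{\omega}_1,\dots,\boldsymbol{\omega}_N$, so that $f^{\star}:=\sum_j c^{\star}_j\,\phi(\langle\cdot,\boldsymbol{\omega}_j\rangle)$ satisfies $\|\c^{\star}\|_2\le\|f\|_\rho/\sqrt{N}$ and, with high probability over $\W$, $\|f-f^{\star}\|_{L^2(d\mu)}\lesssim\|f\|_\rho/\sqrt{N}$. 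Writing $\y=\A\c^{\star}+\mathbf{e}'$, the residual $\mathbf{e}'$ collects the sampled error of $f-f^{\star}$ and the measurement noise $\mathbf{e}$, so $m^{-1/2}\|\mathbf{e}'\|_2\lesssim E+\|f\|_\rho/\sqrt{N}$.

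Next I would apply Theorem~\ref{thm:convergence} with $\c=\c^{\star}$ and $\mathbf{e}=\mathbf{e}'$. When Algorithm~\ref{alg:cap} terminates the relative residual is at most $\epsilon$, i.e. $\|\A\c-\y\|_2\le\epsilon\|\y\|_2\lesssim\epsilon\sqrt{m}\,\|f\|_\rho$, and the geometric term $\beta^n\|\c^{\star}\|_2$ has become negligible, so the output obeys
\begin{equation*}
\|\c-\c^{\star}\|_2\ \le\ \frac{D_1}{\sqrt{s}}\,\kappa_{1,s}(\c^{\star})\ +\ \frac{D_2}{\sqrt{1+\lambda}}\Big(m^{-1/2}\|\mathbf{e}'\|_2+\sqrt{\lambda}\,\|\c^{\star}\|_2\Big)\,.
\end{equation*}
With $m^{-1/2}\|\mathbf{e}'\|_2\lesssim E+\|f\|_\rho/\sqrt N$ and $\|\c^{\star}\|_2\le\|f\|_\rho/\sqrt N$, the three pieces of this bound are the seeds of the $C\sqrt{(1+\lambda)/s}\,\kappa$, the $DE$, and the $D\sqrt{\lambda/N}\,\|f\|_\rho$ contributions, while $\epsilon\|f\|_\rho$ comes from the termination bound.

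The technical core is the passage from coefficients to $L^2(d\mu)$. Writing $f-f^{\#}=(f-f^{\star})+\sum_j(c^{\star}_j-c_j)\phi(\langle\cdot,\boldsymbol{\omega}_j\rangle)$, I observe that for any $\z\in\C^N$, $\big\|\sum_j z_j\phi(\langle\cdot,\boldsymbol{\omega}_j\rangle)\big\|_{L^2(d\mu)}^2=\z^{*}\Phi\z$, where $\Phi_{jk}=\int\phi(\langle\x,\boldsymbol{\omega}_j\rangle)\overline{\phi(\langle\x,\boldsymbol{\omega}_k\rangle)}\,d\mu(\x)$ has unit diagonal, so $\z^{*}\Phi\z\le m^{-1}\|\A\z\|_2^2+\|\Phi-m^{-1}\A^{*}\A\|_{\max}\,\|\z\|_1^2$. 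A Hoeffding bound on each of the $N^2$ entries of $\Phi-m^{-1}\A^{*}\A$ (each a sum of $m$ unit-modulus i.i.d.\ terms) gives $\|\Phi-m^{-1}\A^{*}\A\|_{\max}\lesssim(m^{-1}\log(1/\delta))^{1/2}$ with probability $1-\delta$; combined with $\|\z\|_1\le\sqrt N\|\z\|_2$ and the RIP estimates from the proof of Theorem~\ref{thm:convergence} (which bound $\delta_{6s}(\B)$ by $1/\sqrt3$, hence $\delta_{6s}(m^{-1/2}\A)$ by a quantity of order $1+\lambda$, producing the $3^{-1/4}$ and $\sqrt{1+\lambda}$ factors) this yields a bound of the form $\big\|\sum_j z_j\phi(\langle\cdot,\boldsymbol{\omega}_j\rangle)\big\|_{L^2(d\mu)}\lesssim\big(3^{-1/4}+N^{1/2}(2m\log(1/\delta))^{1/4}\big)\|\z\|_2$ on the $\mathcal{O}(s)$-sparse pieces, and a sharper $m^{-1/2}$-scaled version on the piece coming directly from $m^{-1/2}\|\mathbf{e}'\|_2$. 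Substituting $\z=\c^{\star}-\c$ together with the three pieces of the previous step, and absorbing the Monte-Carlo term $\|f\|_\rho/\sqrt N$ into $\epsilon\|f\|_\rho$, reproduces the three groups of terms in the statement (up to absolute constants).

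The main obstacle is exactly this last conversion: passing from $\ell^2$ control of $\c^{\star}-\c$ to $L^2(d\mu)$ control of the associated random-feature function without paying a full factor $N$. The crude estimate $\big\|\sum_j z_j\phi\big\|_{L^2(d\mu)}\le\|\z\|_1\le\sqrt N\|\z\|_2$ is far too lossy, so one has to exploit that the population Gram $\Phi$ is close to the empirical Gram $m^{-1}\A^{*}\A$ and, on $\mathcal{O}(s)$-sparse supports, to the identity — which is where the sample-complexity hypotheses on $m$, $N$, $\delta$ and the boundedness of the complex-exponential feature are used, and where the quarter-power exponents and the $3^{-1/4}$ and $\sqrt{1+\lambda}$ constants in the final bound originate. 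A secondary, purely bookkeeping, difficulty is keeping straight which error contributions inherit the $m^{-1/2}$ improvement (the pure-noise term) and which do not.
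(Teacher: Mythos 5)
Your proposal is correct in substance and follows the same skeleton as the paper: split $\|f-f^{\#}\|_{L^2(d\mu)}\le\|f-f^{\star}\|_{L^2(d\mu)}+\|f^{\star}-f^{\#}\|_{L^2(d\mu)}$, bound the first piece by $\epsilon\|f\|_\rho$ via the Monte-Carlo lemma (Lemma~\ref{thm:lemma1}), control $\|\c^{\star}-\c^{\#}\|_2$ by the cluster-point bound of Theorem~\ref{thm:convergenceHTP} with $\|\tilde{\mathbf{e}}\|_2^2=\|\mathbf{e}\|_2^2+m\lambda\|\c^{\star}\|_2^2$, and then pay $\sqrt{N}\,\|\c^{\star}-\c^{\#}\|_2$ times a quarter-power of $m^{-1}\log(1/\delta)$ to pass from coefficients to the population $L^2$ norm. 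Where you genuinely diverge is in that last passage: the paper introduces a ghost sample $\{\mathbf{z}_j\}$ independent of the training data and applies McDiarmid's inequality to $v=\|f^{\star}-f^{\#}\|_{L^2}^2-\frac1m\sum_j|f^{\star}(\mathbf{z}_j)-f^{\#}(\mathbf{z}_j)|^2$, with the factor $N\|\tilde{\c}^{\star}-\tilde{\c}^{\#}\|_2^2$ entering through the Cauchy--Schwarz bound on the bounded differences; you instead bound $\z^{*}\Phi\z\le m^{-1}\|\A\z\|_2^2+\|\Phi-m^{-1}\A^{*}\A\|_{\max}\|\z\|_1^2$ and control the Gram discrepancy entrywise by Hoeffding. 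Your route has the advantage of being uniform in $\z$, so it cleanly handles the fact that $\c^{\#}$ depends on the training data (a point the paper finesses by asserting independence of the $\mathbf{z}_j$ and then identifying the ghost-sample average with $\frac1m\|\tilde{\mathbf{B}}(\tilde{\c}^{\star}-\tilde{\c}^{\#})\|_2^2$, which is built from the training points); the price is a union bound over $N^2$ entries, so your deviation term is $(m^{-1}\log(N^2/\delta))^{1/2}$ rather than the paper's $(2m^{-1}\log(1/\delta))^{1/2}$, i.e.\ you lose a $\log N$ factor relative to the stated constants. One small bookkeeping correction: in the theorem the accuracy parameter $\epsilon$ is the Monte-Carlo accuracy of Lemma~\ref{thm:lemma1} (requiring $N\gtrsim\epsilon^{-2}$), not the algorithm's stopping tolerance, so the term $\epsilon\|f\|_\rho$ already \emph{is} the $\|f-f^{\star}\|$ contribution rather than something it gets absorbed into; your final sentence conflates these two roles of $\epsilon$, though the resulting term is the same.
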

\begin{proof}
We use the $L^2$ norm, which can be decomposed into two parts using the triangle inequality:
\[||f-f^{\sharp}||_{L^2(d\mu)} \leq ||f-f^{\star}||_{L^2(d\mu)} +||f^{\star}-f^{\sharp}||_{L^2(d\mu)}.\]
The approximation $f^{\sharp}$ is defined in equation (\ref{eq:linearcombo}) and the best $\phi$-based approximation $f^{\star}$  is given in equation (\ref{eq:fstar}).

Following the proof in Section 6 of \cite{chen2021conditioning}, if $N\geq \dfrac{1}{\epsilon^2}\left(1+\sqrt{2\log\left(\dfrac{1}{\delta}\right)}\right)^2$, then with probability at least $1-\delta$, we have
\begin{equation}\label{eq:lemma1}
||f-f^{\star}||_{L^2(d\mu)}\leq \epsilon ||f||_{\rho}.
\end{equation}

We use McDiarmid's Inequality to bound $||f^{\star}-f^{\sharp}||_{L^2(d\mu)}$, arguing similarly as in Lemma 2 from \cite{hashemi2021generalization} or Section 6 of \cite{chen2021conditioning}. Let $\{\mathbf{z}_j\}_{j\in[m]}$ be i.i.d. random variables sampled from the distribution $\mu$ which are independent from the sampled points $\{\mathbf{x}_j\}_{j\in [m]}$ and the frequencies $\{\boldsymbol{\omega}\}_{k\in[N]}$. This independence assumption makes $\{z_j\}_{j\in [m]}$ also independent of coefficients $\c^{\sharp}$ and $\c^{\star}$ and thus allows for the following argument. Let $v$ be a random variable defined by

\[v(\mathbf{z_1,...,z_m}) = \|f^{\star}-f^{\sharp}\|_{L^2(d\mu)}^2 - \dfrac{1}{m}\sum_{j=1}^m |f^{\star}(\mathbf{z}_j) - f^{\sharp}(\mathbf{z}_j)|^2.\]
Then $\mathbb{E}_{\mathbf{z}}[v] = 0$ as
\[ \mathbb{E}_{\mathbf{z}}[|f^{\star}(\mathbf{z}_j) - f^{\sharp}(\mathbf{z}_j)|^2] = \mathbb{E}_{\mathbf{z_1,...,z_m}}[|f^{\star}(\mathbf{z}_j) - f^{\sharp}(\mathbf{z}_j)|^2] = \|f^{\star}-f^{\sharp}\|_{L^2(d\mu)}^2.\]
We perturb the $k-$th component of $v$ to get,
\[|v(\mathbf{z_1,...,z_k,...,z_m}) - v(\mathbf{z_1,...,\tilde{z}_k,...,z_m})|\leq \dfrac{1}{m}\left||f^{\star}(\mathbf{z}_k) - f^{\sharp}(\mathbf{z}_k)|^2 - |f^{\star}(\mathbf{\tilde{z}}_k) - f^{\sharp}(\mathbf{\tilde{z}}_k)|^2\right|.\]
Using Cauchy-Schwarz inequality, for any $\mathbf{z}$, we have,
\[|f^{\star}(\mathbf{\tilde{z}}_k) - f^{\sharp}(\mathbf{\tilde{z}}_k)|^2 \leq N\| \tilde{\c}^{\star} - \tilde{\c}^{\sharp}\|_2^2,\]
which holds since $|\phi(\mathbf{z},\boldsymbol{\omega})|=1.$ Hence,
\[|v(\mathbf{z_1,...,z_k,...,z_m}) - v(\mathbf{z_1,...,\tilde{z}_k,...,z_m})|\leq \dfrac{2N}{m}\| \tilde{\c}^{\star} - \tilde{\c}^{\sharp}\|_2^2:=\Delta.\]

Therefore, we can apply McDiarmid’s inequality to the random variable $v$, i.e. $P_{\mathbf{z}}(v-\mathbb{E}_{\mathbf{z}}[v] \geq t) \leq \exp(-\frac{2t^2}{m\Delta^2})$ where $t := \Delta\sqrt{\dfrac{m}{2}\log\left(\dfrac{1}{\delta}\right)}$. Following the results from Theorem \ref{thm:convergence}, we have that $\delta_{6s}(\B)<\frac{1}{\sqrt{3}}$ (the matrix $\B$ is as obtained in equation (\ref{eqn:regression_rescaled})), then with $s$ replaced by $2s$, with probability at least $1-3\delta$ ($2\delta$ for the coherence bound and $\delta$ from the \ref{eq:lemma1}), we have:
\begin{center}
$\|f^{\star}-f^{\sharp}\|_{L^2(d\mu)}^2 \leq \dfrac{1}{m}\sum_{j=1}^m |f^{\star}(\mathbf{z}_j) - f^{\sharp}(\mathbf{z}_j)|^2 + N\left(\sqrt{\dfrac{2}{m}\log\left(\dfrac{1}{\delta}\right)}\right)\| \tilde{\c}^{\star} - \tilde{\c}^{\sharp}\|_2^2$\\
$=\dfrac{1}{m}\|\tilde{\mathbf{B}}(\tilde{\c}^{\star} - \tilde{\c}^{\sharp})\|_2^2 + N\left(\sqrt{\dfrac{2}{m}\log\left(\dfrac{1}{\delta}\right)}\right)\| \tilde{\c}^{\star} - \tilde{\c}^{\sharp}\|_2^2$\\
$\leq \dfrac{1}{\sqrt{3}m}\|(\tilde{\c}^{\star} - \tilde{\c}^{\sharp})\|_2^2 + N\left(\sqrt{\dfrac{2}{m}\log\left(\dfrac{1}{\delta}\right)}\right)\| \tilde{\c}^{\star} - \tilde{\c}^{\sharp}\|_2^2.$
\end{center}
From Equation (\ref{eq:convhtpcluster}) of Theorem~\ref{thm:convergenceHTP}, $\|(\tilde{\c}^{\star} - \tilde{\c}^{\sharp})\|_2\leq \dfrac{C}{\sqrt{s}}\kappa_{s,1}(\tilde{\c}^{\star}) + D\|\tilde{\mathbf{e}}\|_2$, where $C,D>0$ depend only on $\delta_{6s}(\B)$. Since $\tilde{\mathbf{c}} = \sqrt{m+m\lambda}\c$, transforming to original variables, we have:
\begin{equation}\label{eq:lemma2}
    \|f^{\star}-f^{\sharp}\|_{L^2(d\mu)} \leq \left(\dfrac{1}{\sqrt{3}m} + N\left(\sqrt{\dfrac{2}{m}\log\left(\dfrac{1}{\delta}\right)}\right)\right)^{\frac{1}{2}}\left(\sqrt{m+m\lambda}\dfrac{C}{\sqrt{s}}\kappa_{s,1}(\|\c^{\star}\|) + D\sqrt{\|\mathbf{e}\|^2 + m\lambda\|\c^{\star}\|_2^2}\right)
\end{equation}
Note $|\c_k^{\star}|= \dfrac{1}{N}\left|\dfrac{\alpha(\boldsymbol{\omega}_k)}{\rho(\omega)_k}\right|\leq \dfrac{1}{N}\|f\|_{\rho}$ and $\|\mathbf{e}\|_2\leq E$. Thus, combining Equations (\ref{eq:lemma1}) and (\ref{eq:lemma2}) yields
\begin{align*}
    &\|f - f^{\sharp}\|_{L^2(d\mu)} \\
    &\leq \epsilon\|f\|_{\rho} + \left(\dfrac{1}{\sqrt{3}m} + N\left(\sqrt{\dfrac{2}{m}\log\left(\dfrac{1}{\delta}\right)}\right)\right)^{\frac{1}{2}}\left(\sqrt{m+m\lambda}\dfrac{C}{\sqrt{s}}\kappa_{s,1}(\|\c^{\star}\|) + D\sqrt{E^2 + m\lambda N^{-1}\|f\|_{\rho}^2}\right)\\
    &\leq \epsilon\|f\|_{\rho} + \left(3^{-\frac{1}{4}}m^{-\frac{1}{2}} + N^{\frac{1}{2}}\left(\dfrac{2}{m}\log\left(\dfrac{1}{\delta}\right)\right)^{\frac{1}{4}}\right)\left(\sqrt{m}\sqrt{1+\lambda}\dfrac{C}{\sqrt{s}}\kappa_{s,1}(\|\c^{\star}\|) + D(E + \sqrt{m\lambda} N^{-\frac{1}{2}}\|f\|_{\rho}\right)\\
    & = \|f\|_{\rho}\left(\epsilon +  D\sqrt{\dfrac{\lambda}{N}}\left(3^{-\frac{1}{4}} + \left(2m\log\left(\dfrac{1}{\delta}\right)\right)^{\frac{1}{4}}\right)\right)\\  &\ \ +C\sqrt{\dfrac{1+\lambda}{s}}\left(3^{-\frac{1}{4}} + N^{\frac{1}{2}}\left(2m\log\left(\dfrac{1}{\delta}\right)\right)^{\frac{1}{4}}\right)\kappa_{s,1}(\|\c^{\star}\|)
    + DE\left(3^{-\frac{1}{4}}m^{-\frac{1}{2}} + N^{\frac{1}{2}}\left(\dfrac{2}{m}\log\left(\dfrac{1}{\delta}\right)\right)^{\frac{1}{4}}\right).
\end{align*}
\end{proof}

 Theorem~\ref{thm:convergence} highlights a theoretical purpose of $\lambda>0$ in terms of convergence. The constants $D_1,D_2>0$ in Theorem~\ref{thm:convergence} depend on $\delta_{6s}\left(\B\right)$.  As $\lambda$ approaches zero, the value of $\delta_{6s}\left(\B\right)$ approaches the larger value of $\delta_{6s}\left(A\right)$ and thus $\beta$ increases, which can lead to slower convergence. As $\lambda$ becomes large, the solution approaches zero and the error bounds in Theorem~\ref{thm:convergence}  become trivial. In practice, we found that a small non-zero value is useful for convergence and for mitigating the effects of noise and outliers.

Theorem~\ref{thm:convergence} is stated for any vector $\mathbf{c}$. We can consider two potential vectors $\mathbf{c}$ depending on the scaling of $N$ and $m$. First, if $N$ is sufficiently large, then by the results of \cite{hashemi2021generalization, chen2021conditioning} the matrix $\A$ will be well-conditioned with high probability (for any $\lambda>0$) and thus there exists a $\mathbf{c}$ such that $\mathbf{e}=\mathbf{0}$. Therefore, for small $\lambda>0$ the relative error is dominated by the compressibility of $\mathbf{c}$:
\begin{align}
\frac{\| \mathbf{c}^{n}-\mathbf{c}\|_2}{ \|\mathbf{c}\|_2} \leq 2\beta^n   + \frac{C}{\sqrt{s}} \frac{\kappa_{1,s}(\mathbf{c})}{\|\mathbf{c}\|_2}+D \, \sqrt{\frac{\lambda }{1+ \lambda}}. 
\end{align}
In this setting, the HARFE algorithm can be seen as a pruning approach that generates a subnetwork with $s$ connections that is an approximation to the full $N$-parameter network, see \cite{frankle2018lottery, zhou2019deconstructing}.

Alternatively, we can consider the function approximation results found in \cite{rahimi2007random, rahimi2008weighted, rahimi2008uniform, hashemi2021generalization}. Suppose we are given a probability density $\rho$ used to sample the entries of the random weights $\boldsymbol{\omega} \in \mathbb{R}^d$ and a function $\phi:\mathbb{R}^{2d} \to \mathbb{C}$. A function $ f\in  \mathcal{F}(\phi,\rho) $ if $f:\mathbb{R}^d \to \mathbb{C}$ has finite $\rho$-norm with respect to $\phi$ defined by
\begin{equation*}
    \mathcal{F}(\phi,\rho)=\left\{ f(\mathbf{x}) = \int_{\mathbb{R}^d} \alpha(\boldsymbol{\omega}) \phi(\mathbf{x};\boldsymbol{\omega}) \, d\boldsymbol{\omega} \ \Bigg| \  \|f\|_\rho :=\sup_{\boldsymbol{\omega}} \left| \frac{\alpha(\boldsymbol{\omega})}{\rho(\boldsymbol{\omega})} \right|<\infty \right\},
\end{equation*}
where $\rho(\boldsymbol{\omega})= \rho(\omega_1) \ldots \rho(\omega_d)$.
The random feature approximation of $f\in \mathcal{F}(\phi,\rho)$ is denoted by $f^\sharp$ and defined as
 \begin{equation}\label{eq:linearcombo}
    f^\sharp(\mathbf{x}) = \sum_{j=1}^N c^\sharp_j \,\phi(\mathbf{x},\boldsymbol{\omega}_j),
\end{equation}
where the weights $\{\boldsymbol{\omega}_j\}_{j\in [N]}$ are sampled i.i.d. from the density $\rho$.
Following \cite{rahimi2007random, rahimi2008weighted, rahimi2008uniform}, the best $\phi$-based approximation of $f\in  \mathcal{F}(\phi,\rho)$ is given by $f^\star$
\begin{equation}\label{eq:fstar}
    f^{\star}(\mathbf{x}) =  \frac{1}{N} \sum_{j=1}^N \frac{\alpha(\boldsymbol{\omega}_j)}{\rho(\boldsymbol{\omega}_j)}\, \phi(\mathbf{x},\boldsymbol{\omega}_j),
\end{equation}
where the coefficients with respect to the random features are defined as $c^{\star}_j:= \frac{\alpha(\boldsymbol{\omega}_j)}{N\rho(\boldsymbol{\omega}_j)}$ for all $j\in[N]$ and thus $\|c^\star\|_2 \leq N^{-\frac{1}{2}} \, \|f\|_\rho$. For example, let $\rho$ be the density associated with $\mathcal{N}({0},\sigma^2)$ and assume that the conditions of Theorem~\ref{thm:convergence} hold. In this setting, it was shown in \cite{hashemi2021generalization} that $\|\mathbf{e}^\star\|_\infty=\|A\mathbf{c}^\star-\mathbf{y}\|_\infty \leq \epsilon \|f\|_\rho$, where
\begin{align*}
\epsilon:=  \frac{1}{\sqrt{N}}\left(1 + 4 \gamma\sigma  d \sqrt{1+\sqrt{\frac{12}{d} \log\frac{m}{\delta}}}+ \sqrt{\frac{1}{2}\log\left(\frac{1}{\delta}\right)}\right).
\end{align*}
Therefore, the bound in Theorem~\ref{thm:convergence} becomes
\begin{align}\label{eqn:function_bound}
\| \c^{n}-\c^\star\|_2 
&\leq \left(2\beta^n \,N^{-\frac{1}{2}}+ \frac{C}{\sqrt{s}} \left(1-sN^{-1}\right)  + D\, \sqrt{\frac{\epsilon^2 +\lambda N^{-1}}{1+\lambda}} \right) \, \|f\|_\rho,
\end{align}
which scales like $N^{-\frac{1}{2}}$. Equation~\eqref{eqn:function_bound} could be refined, since in multiple places an $\ell^2$ bound is replaced by an $\ell^\infty$ norm (noting that $\|f\|_\rho$ is essentially an infinity-like norm). 

For the HARFE results in the following two sections, we observed that the value of $\mu$ does not have a significant impact on the generalization. Therefore, we set $\mu=0.1$ for all experiments, which can be shown to produce a convergent sequence by extending the proofs found in \cite{foucart2011hard}.  In addition, we fix the sparsity ratio $(s/N)$ to be $5\%$ or $10\%$.  Since the parameter $\lambda$ depends on the dataset and the noise level, it should be tuned, for example, using cross-validation. In our experiments, we found that the optimal regularization parameter can range from $10^{-12}$ to $10^{-1}$ depending on the input data (since the data is not normalized). Thus, we optimize our results over a set of possible values for $\lambda$ in order to obtain good generalization. 
\section{Numerical Results on Synthetic Data}
In this section, we test Algorithm \ref{alg:cap} for approximating sparse additive functions including the benchmark examples discussed in \cite{meyer2003support, beylkin2009multivariate, binev2011fast
,potts2021interpretable,hashemi2021generalization}. The experiments show that the HARFE model outperforms several existing methods in terms of testing errors. 

The step-size parameter is set to $\mu=0.1$ for all experiments. Other hyperparameters will be specified for each experiment. The relative test error is calculated as
\begin{equation}
    \text{Rel}(f,f^{\sharp}) = \sqrt{\dfrac{\sum\limits_{\mathbf{x}\in X_{\text{test}}}|f(\mathbf{x}) - f^\sharp(\mathbf{x})|^2}{\sum\limits_{\mathbf{x}\in X_{\text{test}}}|f(\mathbf{x})|^2}},
\end{equation}
and the mean-squared test error is defined as
\begin{equation}
    \text{MSE}(f,f^\sharp) = \dfrac{1}{|X_{\text{test}}|}\sum\limits_{\mathbf{x}\in X_{\text{test}}}|f(\mathbf{x}) - f^\sharp(\mathbf{x})|^2
    \label{eqn:MSE}
\end{equation}
where $f$ is the target function and $f^\sharp$ is the trained function.

\subsection{Low-Order Function Approximation}
In the first example, we show the advantage of using a greedy approach over an $\ell^1$ optimization problem and the benefit of the additional ridge term. The input data is sampled from a uniform distribution $\mathcal{U}[-1,1]^d$ and the activation function is set to $\phi(\cdot)=\sin(\cdot)$. The nonzero entries of the random weights $\boldsymbol{\omega}_j$ are sampled from $\mathcal{N}(0,1)$. We introduce a set of random bias terms $b_j\in \R$ for $j\in[N]$ so that the random feature matrix is now defined as $a_{k,j} = \sin(\langle \mathbf{x}_k,\boldsymbol{\omega}_j \rangle + b_j)$. The bias is sampled from  $\mathcal{U}[0,2\pi]$ to cover all phase angles. The number of random weights is $N =10^4$, the sparsity level is $s=500$, and the number of training and testing data are  $m_\text{train}=500$ and $m_\text{test} = 500$, respectively. In all experiments in this section, we set the maximal number of iterations for our method to $50$. 

Table \ref{table:srfe} shows the median relative error (as a percentage) over 10 randomly generated test sets. In the experiments, we compared the results using $q\in \{1,3,5\}$. In each case, the HARFE approach with $\lambda>0$ is more accurate than HARFE with $\lambda=0$ and the SRFE \cite{hashemi2021generalization}. We observe that when the exact order $q$ is known, the error is lower (see Column 5 of Table \ref{table:srfe}). Although not included in the table, it is worth mentioning that the Elastic Net model performs comparably to the SRFE model, although it does not have the same generalization theory \cite{hashemi2021generalization}.

\begin{table}

\begin{center}
\begin{tabular}{|l|c|c|c|c|c|}
 \hline
  & $q$ & $\dfrac{1}{\sqrt{1+\|\textbf{x}\|_2^2}}$ &$\sqrt{1+\|\textbf{x}\|_2^2}$ &  $ \dfrac{x_1 x_2}{1+x_3^6}$& $\sum\limits_{i=1}^d \exp(-|x_i|)$  \\
\hline
 $d$&  & 5 & 5 &  5 &100 \\
 \hline
 \hline
SRFE & 1 & 3.30 & 1.29 &  102.1  & \color{blue}1.20 \\
 \hline
 HARFE, $\lambda=0$ & 1&3.30 & 1.40 & 105.6     &1.50 \\
\hline
 HARFE, $\lambda>0$ & 1 & 3.20 & 1.00    & 100 &{\color{blue} 1.10} \\
\hline
\hline
 SRFE & 3 & 0.80 &1.0   &8.0 & 1.80\\
\hline
HARFE, $\lambda=0$ & 3 & 1.00 &  \color{blue}0.25    &\color{blue}{3.20}&2.70 \\
\hline
HARFE, $\lambda>0$ & 3 &0.73  & \color{blue}{0.18}     &\color{blue}{3.40} & 2.01  \\
\hline
\hline
SRFE & 5 & \color{blue}{0.56} & 1.10  & 9.24&2.04 \\
\hline
HARFE, $\lambda=0$ & 5&1.80 & 1.08   & 11.20 &3.00 \\
\hline
HARFE, $\lambda>0$ &5 &\color{blue}{0.57} & 1.00   & 7.70 
&2.20 \\
\hline
\end{tabular}

\end{center}
\caption{Relative test errors (as percentage) for approximating various nonlinear functions using different $q$ values. For each function, the two smallest errors are highlighted (in blue). The ridge paramater $\lambda$ using in the HARFE  approach is set to $10^{-4}, 10^{-10}, 10^{-10}$, and $10^{-1}$ (going left to right). In all experiments, $m_{train} =m_{test} = 500$, $N=10^4$, $\x\sim \mathcal{U}[-1,1]^d$, the nonzero entries of $\boldsymbol{\omega}$ are drawn from $\mathcal{N}(0,1)$ and bias is drawn from $\mathcal{U}[0,2\pi]$.} 
\label{table:srfe}
\end{table}

\subsection{Approximation of Friedman Functions}
In this example, we test the HARFE method on the  Friedmann functions, which are used as benchmark examples for certain approximation techniques \cite{meyer2003support, beylkin2009multivariate, binev2011fast
,potts2021interpretable}. The three Friedman functions are defined as, $f_1:[0,1]^{10}\rightarrow \R$
   \[f_1(x_1,...,x_{10}) = 10\sin(\pi x_1x_2)+20\left(x_3-\frac{1}{2}\right)^2+10x_4+5x_5,\]
    $f_2: [0,1]^4\rightarrow \R$
   \[f_2(x_1,x_2,x_3,x_4) = 
 \sqrt{(100x_1)^2 + \Big(x_3(520\pi x_2 + 40\pi) - \dfrac{1}{(520\pi x_2 + 
  \pi)(10x_4 + 1)}\Big)^2},\]
  and  $f_3:[0,1]^4\rightarrow\R$
  \[f_3(x_1,x_2,x_3,x_4) = \arctan\left(\dfrac{x_3(520\pi x_2 + 40\pi) - (520\pi x_2 + 40\pi)^{-1}(10x_4 + 1)^{-1}}{100x_1}\right).\]

\begin{table}[t]
\begin{center}
\begin{tabular}{|l|c|c|c|c|}
 \hline
 Method & $f_1$ & $f_2$ ($\times10^3$)  & $f_3$ ($\times10^{-3}$) \\
 \hline
 svm &  4.36 & 18.13 & 23.15\\
 \hline
 lm &  7.71 & 36.15 & 45.42 \\
 \hline
 mnet &  9.21 & 19.61 & 18.12\\
 \hline
 rForst &  6.02& 21.50& 22.21\\
 \hline
 ANOVA &  \color{blue}{1.43} &17.21 &20.69 \\
\hline
 SRFE, $q=2$ &  2.35 & 5.15 & 18.88\\
\hline
HARFE, $q=2$ &  \color{blue}{1.52}   & \color{blue}{1.31}  
& \color{blue}{10.90} \\
\hline
HARFE, $q$   &  3.01
& \color{blue}{1.90}  & \color{blue}{13.28} \\
    \hline
\end{tabular}

\end{center}
\caption{Mean-squared test errors of different methods when approximating Friedman functions. The values for SRFE and HARFE are obtained by training the model on 100 randomly generated training sets and validating them on 100 randomly generated test sets. We test for different $q$ values. In the last row, $q=5$ for $f_1$ and $q=d=4$ for $f_2$ and $f_3$. For the HARFE, $\lambda=1 \times 10^{-3}, 5\times 10^{-3},$ and $1 \times 10^{-5}$ for the functions $f_1,f_2,$ and $f_3$, respectively. The two best values for every function are highlighted in blue.}
\label{table:friedman}
\end{table}

We follow the setup from \cite{potts2021interpretable}, where both the training and testing input datasets are randomly generated from the uniform distribution $\mathcal{U}[0,1]^d$, $m_{train} = 200$, and $m_{\text{test}}=1000$. In addition, Gaussian noise with zero mean and standard deviations of $\sigma = 1.0,125.0,$ and $0.1$, are added to the output data. The MSE of various methods when approximating Friedman functions are displayed in Table \ref{table:friedman}. We include the results of various methods found in \cite{potts2021interpretable} and compare against the results of SRFE \cite{hashemi2021generalization} and HARFE. For HARFE, the nonzero entries of the random weight vectors $\boldsymbol{\omega}$ and the bias terms are sampled from $\mathcal{U}[-1,1]$. We use $N=10^4$ features for $f_1$ and $N=2\times 10^3$ features for $f_2$ and $f_3$, $s=200$, and 50 iterations in total. Our proposed method achieves the smallest errors when approximating Friedman functions $f_2$ and $f_3$ even when the value of $q$ is unknown (in that case, we assign $q=d$). 

It is worth noting that although the first Friedman function has $d=10$, it is a sparse additive model of order-$2$, and thus is better approximated by the  ANOVA, SRFE, and HARFE models. This is verified by the results in Table \ref{table:friedman} with $q=2$. Note that HARFE with $q=2$ yields a comparable result with ANOVA for $f_1$ while outperforming ANOVA in the other two examples. For the second Friedman function, HARFE with $q=2$ is significantly better than the other methods by almost a factor of 14 times. For the third Friedman function, when the scale of the data is taken into consideration, all methods produce slightly worse results, with HARFE producing the lowest error overall. In Figure \ref{fig:my_label}, scatter plots show the true data compared to the predicted values using the HARFE model over a test set for functions $f_1$, $f_2$ and $f_3$. The HARFE model produces lower variances for $f_1$ and $f_2$, while some bias occurs in $f_3$ near zero.

\begin{figure}[h!]
    \centering
    \includegraphics[scale=0.42]{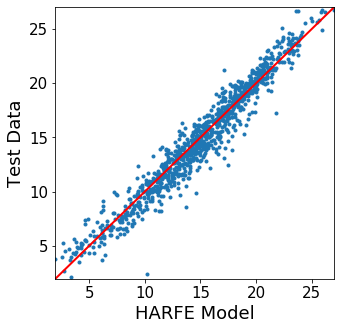}
    \includegraphics[scale = 0.42]{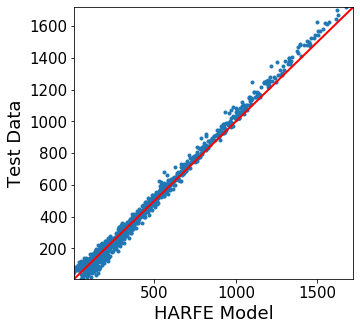}
    \includegraphics[scale = 0.42]{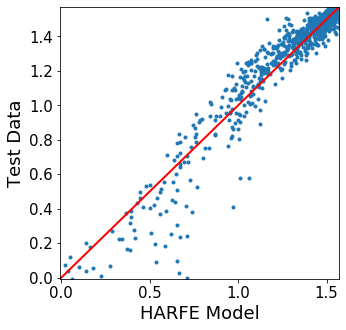}
    \caption{Scatter plots of the true data versus the predicted values using the HARFE model over the test set for functions $f_1$, $f_2$ and $f_3$ (from left to right). }
    \label{fig:my_label}
\end{figure}

\begin{figure}
    \centering
    \includegraphics[scale = 0.45]{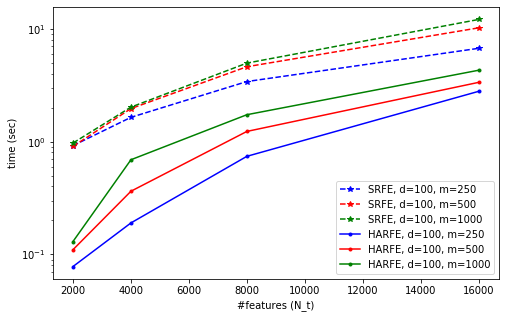}
    \includegraphics[scale = 0.45]{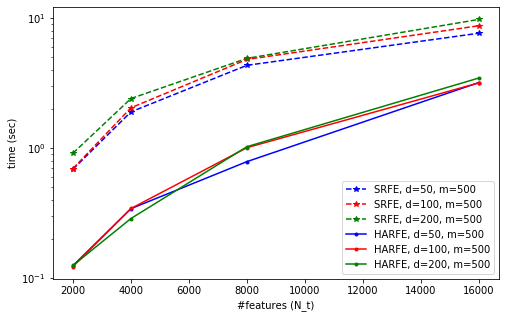}
    \caption{Plots showing the time required (in seconds) for optimizing the trainable weights $\mathbf{c}$ using HARFE compared with SRFE for $m\in\{250,500,1000\}$ with $d=100$ (left) and $d\in\{50,100,200\}$ with $m=500$ (right) for the function $f(\mathbf{x})=\sqrt{1+\|\mathbf{x}\|_2^2}$.}
    \label{fig:runtime}
\end{figure}
In Figure \ref{fig:runtime}, we plot the runtimes (in seconds) of both HARFE and SRFE during the training phase for $f(\mathbf{x})=\sqrt{1+\|\mathbf{x}\|_2^2}$ with different $m$ and $d$. The stopping criteria used for both the algorithm was same. We can see clearly that HARFE is almost 2.5 times faster than SRFE as the number of features $N$ increases.

In the next experiment,  we would like to test HARFE for feature selection. Figure \ref{fig:friedmanHist} displays a histogram illustrating the distribution of indices retained by the HARFE approach for the Friedman function $g:[0,1]^{20}\rightarrow \R,$
   \[g(x_1,...,x_{20}) = 10\sin(\pi x_1x_2)+20\left(x_3-\frac{1}{2}\right)^2+10x_4+5x_5.\]
   In this experiment, we choose $q=2$. From the histogram in Figure \ref{fig:friedmanHist}, we observe that HARFE can redistribute the weights based on active input variables especially when applied for a $q$-order additive function satisfying $q\ll d$ and the number of active variables (five in this case) is much less than the input dimension (which is twenty) of the function. Specifically, the histogram is based on the occurrence rate (as a percentage) of all twenty variables obtained from the HARFE model. The top 5 indices correspond exactly with the correct set.

   \begin{figure}[h!]
    \centering
    \includegraphics[scale=0.5]{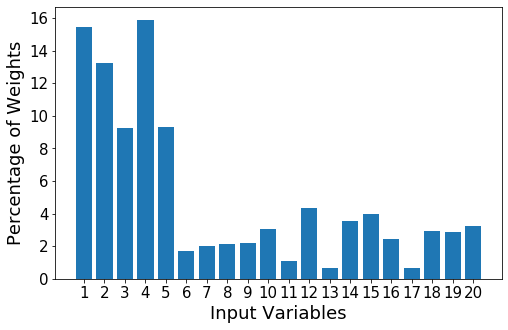}
    \caption{A histogram plot displaying the distribution of indices retained by the HARFE approach applied to the Friedman function $g(x_1,...,x_{20}) = 10\sin(\pi x_1x_2)+20\left(x_3-\frac{1}{2}\right)^2+10x_4+5x_5$ using $q=2$.} The histogram is based on the occurrence rate (as a percentage) of the input variables obtained from the HARFE model. The HARFE model correctly identifies the dominate index set.
    \label{fig:friedmanHist}
\end{figure}

\section{Numerical Results on Real Datasets}
We compare the performance of models obtained by the HARFE algorithm with other state-of-the-art sparse additive models \cite{liu2020sparse,kandasamy2016additive,hashemi2021generalization} when applied to {eleven} real datasets. {An overview of the datasets and the hyperparameters $s,q,m\lambda$ used in the HARFE model are presented in Table \ref{table:data}.}

\begin{table}[h]
\begin{center}
\begin{tabular}{|l|l|l|l|l|l|l|l|l| }
\hline
Dataset &dim & train & val & N  & $s$ & $m\lambda$ &  $q$   \\
\hline
Propulsion & 15 &200 & 200 &  3k & 300 & $10^{-10}$ &2  \\
 \hline
  Galaxy & 20 &2000 & 2000 & 10k & 1k &$10^{-7}$ &2\\
 \hline
  Skillcraft & 18 &1700 & 1630 &  20k & 1k & 1.0&2 \\
 \hline
 Airfoil &  41 &750 & 750 & 80k & 5k &1.0 & 2\\
 \hline
 Forestfire & 10 &211 & 167 &  4220 & 422 & 0.5&2\\
 \hline
  Housing & 12 &256 & 250 &  10k & 1k &0.1 &2 \\
  \hline
 Music & 90 &1000 & 1000 &  10k & 666 & 2.5&3\\
 \hline
Insulin & 50 & 256 & 250 &  2560 & 170 &2.75 &2\\
 \hline
 Speech & 21 &520 & 520 &  20k & 1k & 0.1 &2\\

 \hline
 Telemonitor & 19 &1000 & 867 &  15k & 937 & 0.1&5\\
 \hline
  CCPP & 59 &2000 & 2000 & 10k & 1k & 0.05&1 \\

 \hline
 \end{tabular} 
 \caption{Overview of eleven datasets { and the values of $s,m\lambda$, and $q$ used in the HARFE model}. The experimental setup and datasets for each test follow from \cite{Dua:2019, ravikumar2007spam, kandasamy2016additive, liu2020sparse}.}
 \label{table:data}
   \end{center}

\end{table}

The results of COSSO, Lasso, SALSA, SpAM, and SSAM are obtained from \cite{ravikumar2007spam, kandasamy2016additive, liu2020sparse} and we include the results of the SRFE and HARFE model. The experiments follow the setup from \cite{ravikumar2007spam, kandasamy2016additive, liu2020sparse}, where the training data is normalized so that the input and output values have zero mean and unit variance along each dimensions. Each dataset is divided in half to form the training and testing sets.  The results and comparisons are shown in Table \ref{table:realdata}. {The HARFE approach produces the lowest errors on eight datasets and achieves comparable results on the remaining datasets. Specifically, we significantly outperform other methods on the Propulsion, Airfoil and Forestfire datasets.}  
\begin{table}[h!]
\begin{center}
    \begin{tabular}{|l|c|c|c|c|c|c|c|}
    \hline 
    & HARFE & COSSO & Lasso & SALSA & SpAM &   SRFE &SSAM\\
    \hline
   
         Propulsion & \color{blue}
         0.0000417 & 0.00094 & 0.0248 & 0.0088 & 1.1121 & 0.0154 & - \\
           \hline
            Galaxy & \color{blue}0.0001024 & 0.00153 & 0.0239 & 0.00014 & 0.9542 & 0.0012 &-\\
 \hline
         Skillcraft & \color{blue}0.5368 & 0.5551 & 0.6650 & 0.5470 & 0.9055 &0.8730 &  0.5432\\
         \hline
         Airfoil &\color{blue}0.4492 & 0.5178 & 0.5199 & 0.5176 & 0.9623& 0.5702 & 0.4866 \\
         \hline 
         Forestfire&\color{blue} 0.2937 & 0.3753 & 0.5193& 0.3530 & 0.9694& 0.4067 & 0.3477 \\
         
         \hline 
          Housing & \color{blue} 0.2636 & 1.3097 & 0.4452 &  0.2642 & 0.8165 & 0.6395 & 0.3787\\
          \hline 
         Music& \textcolor{blue}{0.6134} & 0.7982& 0.6349 & 0.6251 & 0.7683 & {1.0454} & 0.6295\\
         \hline 
          Insulin & \textcolor{blue}{1.0137} & 1.1379 & 1.1103 & 1.0206 & 1.2035 & {1.6456} & 1.0146 \\
       \hline 
         Speech & 0.0238&  0.3486 & 0.0730& \color{blue} 0.0224 & 0.6600 & 0.0246 & - \\
        
        \hline
       
         Telemonitor & {0.0370} & 5.7192 & 0.0863 & 0.0347& 0.8643& \color{blue} 0.0336 & 0.0689 \\
         \hline
         CCPP &0.0677  &0.9684&0.07395& 0.0678 &\color{blue} 0.0647& 0.07440&0.0694\\
         \hline 
    \end{tabular}
    \caption{Average MSE on real datasets using various sparse additive models including COSSO, Lasso, SALSA, SpAM, SRFE, SSAM, and HARFE. The lowest error for each dataset is highlighted in blue.}
    \label{table:realdata}
\end{center}
\end{table}

\begin{figure}[h!]
  
     \begin{subfigure}[b]{0.5\textwidth}
         \centering
         \includegraphics[scale = 0.4]{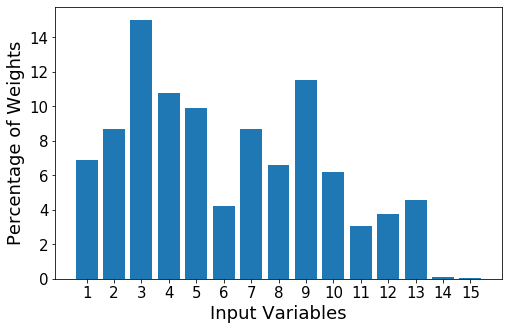}
         \caption{Propulsion dataset}
         \label{fig:Histogram_Plotsa}
     \end{subfigure}
     \hfill
     \begin{subfigure}[b]{0.5\textwidth}
         \centering
         \includegraphics[scale = 0.4]{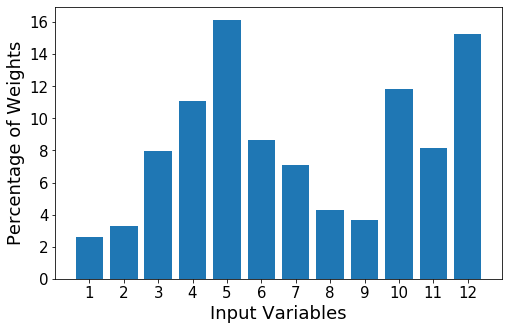}
         \caption{Housing dataset}
         \label{fig:Histogram_Plotsb}
     \end{subfigure}
    \\
     
     \begin{subfigure}[b]{0.5\textwidth}
         \centering
         \includegraphics[scale = 0.4]{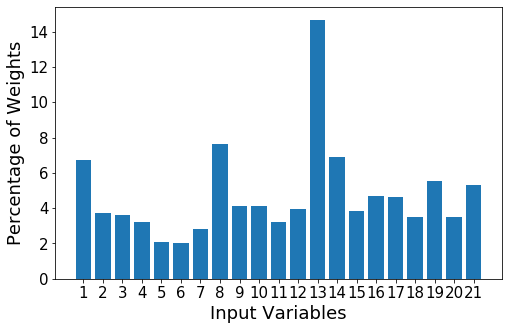}
         \caption{Speech dataset}
         \label{fig:Histogram_Plotsc}
     \end{subfigure}
     \hfill
     \begin{subfigure}[b]{0.5\textwidth}
         \centering
         \includegraphics[scale = 0.4]{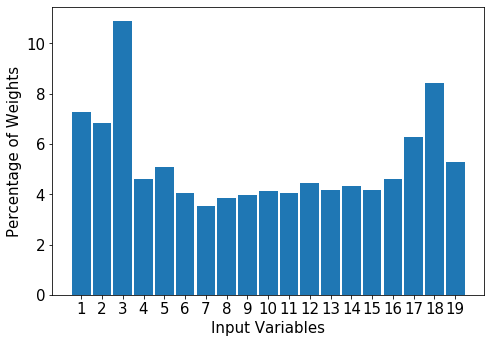}
         \caption{Telemonitoring dataset}
         \label{fig:Histogram_Plotsd}
     \end{subfigure}
     \caption{ The histogram plots the percentage of weights corresponding to each variable based on the (sparse) coefficient vector approximated using HARFE for the Propulsion, Housing, Speech, and the Telemonitor datasets, respectively. }
    \label{fig:Histogram_Plots}
\end{figure}

In Figure \ref{fig:Histogram_Plots}, we plot the histogram of the percentage of weights corresponding to each variable based on the (sparse) coefficient vector approximated using HARFE (from Table \ref{table:realdata}) for the Propulsion, Housing, Speech, and Telemonitor datasets, respectively. For the Propulsion test in Figure \ref{fig:Histogram_Plotsa}, we see that the 14th variable (Gas turbine compressor decay state coefficient) and the 15th variable (Gas turbine decay state coefficient) have the least contribution to the predictor (Lever Position), while the 3rd variable (Gas turbine rate of revolutions) is the most relevant variable to the predictor. From the random sampling, the histograms are initiated uniformly  (at least in expectation). This experiment shows that the HARFE algorithm will redistribute the weights and identify important variables as a benefit of the sparsity-promoting aspect.

From Figure \ref{fig:Histogram_Plotsb}, the HARFE variable selection suggests that the predictor of the House dataset (per capita crime rate by town) is most affected by the 5th variable (proportion of owner-occupied units built prior to 1940) and the 12th variable (median value of owner-occupied homes in $\$1000$'s). In Figure \ref{fig:Histogram_Plotsc}, the plot shows that the 13th (Noise-to-Harmonic or NTH parameter) significantly contributes to the output of the predictor (median pitch) of speech dataset. Lastly, for the Telemonitoring, the experiment in Figure \ref{fig:Histogram_Plotsd} shows several significant contributors to HARFE trained predictor.

\section{Summary}

In this work, we proposed a new high-dimensional sparse additive model utilizing the random feature method with two sparsity priors. First, we assumed that the number of terms needed in the model is small which leads to function approximations with low model complexity. Secondly, we enforce a random and sparse connectivity pattern between the hidden layer and the input layer which helps to extract input variable dependencies. Based on the numerical experiments on high-dimensional synthetic examples, the Friedman functions, and real data, the HARFE algorithm was shown to produce robust results that have the added benefit of extracting interpretable variable information. The analysis of the HARFE algorithm utilizes techniques from compressive sensing and it was shown that the method converges and has a reasonable error bound depending on the number of features, the number of samples, the ridge parameter, the sparsity, the noise level, and dimensional parameters.  We expect that risk bounds for the HARFE model can be obtained by following the proofs found in \cite{hashemi2021generalization}. In ongoing work, we would like to incorporate prior variable dependency information within the construction of the weight matrix. 
\section*{Acknowledgement}
E.S. and G.T. were supported in part by NSERC RGPIN 50503-10842. H.S. was supported in part by AFOSR MURI FA9550-21-1-0084 and NSF DMS-1752116.

 \section*{Conflict of Interest Statement}
 
On behalf of all authors, the corresponding author states that there is no conflict of interest.

\bibliographystyle{abbrv}

\newpage
\appendix
\appendixpage
\section{Key Results}

\begin{theorem}[\textbf{Restricted Isometry Constants} from \cite{chen2021conditioning}]\label{thm:RIP}
Let the data $\{\mathbf{x}_k\}_{k\in [m]}$ be drawn from $\mathcal{N}(\mathbf{0},\gamma^2 \mathbf{I}_d)$, the weights $\{\boldsymbol{\omega}_j\}_{j\in [N]}$ be drawn from $\mathcal{N}(\mathbf{0},\sigma^2 \mathbf{I}_d)$, and the random feature matrix $\mathbf{A}\in\mathbb{C}^{m\times N}$ be defined component-wise by $a_{k,j} = \exp(i \langle \mathbf{x}_k,\boldsymbol{\omega}_j \rangle)$. For $\eta_1, \eta_2, \delta \in (0,1)$ and some integer $s\geq 1$, if
\begin{align*}
    m&\geq C_1 \eta_1^{-2}\, s\, \log(\delta^{-1}) \\
    \frac{m}{\log(3m)} &\geq C_2 \eta_2^{-2}\, s \,\log^2(s) \log\left(\frac{N}{9\log(2m)}+3\right) \\
    \sqrt{\delta}&\,\eta_1\, (4\gamma^2\sigma^2+1)^{\frac{d}{4}}\geq N,
\end{align*}
where $C_1$ and $C_2$ are universal positive constants, then with probability at least $1-2\delta$, the $s$ restricted isometry constant of $\frac{1}{\sqrt{m}}\mathbf{A}$ is bounded by
$$
\delta_{s}\left(\frac{1}{\sqrt{m}}\mathbf{A}\right)< 3\eta_1 + \eta_2^2 + \sqrt{2}\eta_2.
$$
\end{theorem}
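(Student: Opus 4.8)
The plan is to prove the restricted isometry bound by conditioning on the random weights $\{\boldsymbol{\omega}_j\}$ and splitting the Gram deviation into a coherence term governed by the weights and a data-fluctuation term governed by the samples $\{\mathbf{x}_k\}$. First I would record the two structural facts that make the problem tractable: every entry of $\A$ has modulus one, so each column of $m^{-1/2}\A$ has unit norm and the diagonal of $m^{-1}\A_S^*\A_S$ equals $\mathbf{I}_S$ exactly. Hence $m^{-1}\A_S^*\A_S-\mathbf{I}_S$ is a hollow matrix and I only ever need to control off-diagonal entries. For $j\neq j'$ the entry is $m^{-1}\sum_k\exp(i\langle \mathbf{x}_k,\boldsymbol{\omega}_{j'}-\boldsymbol{\omega}_j\rangle)$, and since $\mathbf{x}_k\sim\mathcal{N}(\mathbf{0},\gamma^2\mathbf{I}_d)$ its conditional mean over the data is the Gaussian characteristic function $G_{j,j'}=\exp(-\tfrac{\gamma^2}{2}\|\boldsymbol{\omega}_{j'}-\boldsymbol{\omega}_j\|_2^2)$. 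I would therefore write $m^{-1}\A_S^*\A_S-\mathbf{I}_S=\mathbf{G}_S^{\mathrm{off}}+\mathbf{F}_S$, where $\mathbf{G}_S^{\mathrm{off}}$ is the weight-dependent, data-expected off-diagonal kernel block and $\mathbf{F}_S$ is the centred data fluctuation, and bound the two pieces separately, combining by the triangle inequality.

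The coherence piece $\mathbf{G}_S^{\mathrm{off}}$ is where the third hypothesis enters. Since $\boldsymbol{\omega}_{j'}-\boldsymbol{\omega}_j\sim\mathcal{N}(\mathbf{0},2\sigma^2\mathbf{I}_d)$, a one-line Gaussian integral gives the second moment $\mathbb{E}\,G_{j,j'}^2=(1+4\gamma^2\sigma^2)^{-d/2}$. I would then control $\sup_{|S|=s}\|\mathbf{G}_S^{\mathrm{off}}\|_{2\to2}$ not through a wasteful Gershgorin/Frobenius estimate (which loses a factor of $s$) but through an operator-norm concentration argument, namely an $\varepsilon$-net over $s$-sparse unit vectors combined with a Bernstein tail (or noncommutative Khintchine), uniform over the $\binom{N}{s}$ supports. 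The exponential smallness $(1+4\gamma^2\sigma^2)^{-d/2}$ in high dimension is precisely what lets the union bound over supports go through, and the assumption $\sqrt{\delta}\,\eta_1(4\gamma^2\sigma^2+1)^{d/4}\ge N$ is exactly what reduces the resulting bound to the $\eta_1$ contribution with probability at least $1-\delta$.

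For the fluctuation block $\mathbf{F}_S$ I would invoke the standard machinery for restricted isometry of bounded random sampling matrices. Conditioned on the weights, $m^{-1}\A_S^*\A_S$ is an average of $m$ i.i.d. rank-one Hermitian matrices $\mathbf{a}^{(k)}_S(\mathbf{a}^{(k)}_S)^*$ whose entries are bounded by one, so uniform control of $\sup_{|S|=s}\|\mathbf{F}_S\|_{2\to2}$ reduces to bounding the supremum of a second-order chaos process indexed by $s$-sparse unit vectors. I would use Dudley's entropy integral and the suprema-of-chaos estimates (Rudelson--Vershynin; Foucart--Rauhut, Ch.~12) to bound the expected supremum, which yields the $\eta_2^2+\sqrt{2}\,\eta_2$ form together with the chaining requirement $m/\log(3m)\gtrsim\eta_2^{-2}s\log^2(s)\log(N/(9\log(2m))+3)$, while a matrix-Bernstein tail supplies the high-probability deviation and the condition $m\gtrsim\eta_1^{-2}s\log(\delta^{-1})$. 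Taking a union bound over the two failure events, each of probability $\delta$, then gives $\delta_s(m^{-1/2}\A)<3\eta_1+\eta_2^2+\sqrt{2}\,\eta_2$ with probability at least $1-2\delta$.

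The main obstacle is the fluctuation estimate: passing from a fixed support to a bound that is uniform over all $\binom{N}{s}$ sparse supports cannot be done by a naive union bound and genuinely requires the chaining/chaos-process apparatus, with careful bookkeeping of the boundedness and variance parameters so that the logarithmic factors emerge as the stated $\log^2(s)\log(N/(9\log(2m))+3)$ and the $m/\log(3m)$ scaling rather than weaker ones. A secondary subtlety is that the features are not an exact bounded orthonormal system, so the coherence block must be shown to perturb the restricted isometry constant additively by only $O(\eta_1)$ — which is why the operator-norm (rather than entrywise) control of $\mathbf{G}_S^{\mathrm{off}}$ in the second step is essential.
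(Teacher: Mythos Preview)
The paper does not prove this statement. Theorem~\ref{thm:RIP} is quoted in the appendix as a key result imported from \cite{chen2021conditioning} and is invoked as a black box inside the proofs of Theorems~\ref{thm:convergence} and~\ref{thm:risk}; no argument for it appears anywhere in the present paper. There is therefore no in-paper proof against which to compare your proposal.

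That said, your outline is a reasonable route to the cited result and matches the standard template for RIP of random-feature matrices: condition on the weights, split the restricted Gram deviation into (i) the expected off-diagonal kernel block $\mathbf{G}_S^{\mathrm{off}}$, whose entries $\exp\!\big(-\tfrac{\gamma^2}{2}\|\boldsymbol{\omega}_{j'}-\boldsymbol{\omega}_j\|_2^2\big)$ have second moment $(1+4\gamma^2\sigma^2)^{-d/2}$ and are governed by the third hypothesis, and (ii) a centred data-fluctuation block handled by the bounded-system chaos-process machinery (Rudelson--Vershynin; Foucart--Rauhut, Ch.~12), which yields the $\eta_2^2+\sqrt{2}\,\eta_2$ shape under the second hypothesis together with a Bernstein-type tail under the first. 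One caveat on your coherence step: the entries $G_{j,j'}$ are not independent across pairs (they share the underlying $\boldsymbol{\omega}_j$'s), so a direct Bernstein/$\varepsilon$-net concentration on that block requires care; the more robust path is to bound the maximal pairwise coherence via Markov's inequality on the explicit second moment plus a union bound over all $\binom{N}{2}$ pairs---which is exactly what the third hypothesis is calibrated to deliver---and then pass to an operator-norm perturbation of the RIP constant.
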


\begin{theorem}[\textbf{Convergence of HTP} Theorem 6.20 from \cite{foucart2013mathematical}]\label{thm:convergenceHTP}
Suppose that the $(6s)^\text{th}$ order restricted isometry constant of $\mathbf{A}\in \C^{m \times N}$ satisfies $\delta_{6s}<\frac{1}{\sqrt{3}}$,
then for any $\c\in \C^N$ and $\mathbf{e}\in \C^m$, the sequence $\c^{k}$ defined by the hard thresholding pursuit with $y=\mathbf{A}\c+\mathbf{e}$, $\c^0=\textbf{0}$, using $2s$ instead of $s$ in the algorithm, satisfies
\begin{align}\label{eq:convhtpseq}
\| \c^{n}-\c\|_2 &\leq 2\beta^n \|\c\|_2 + \frac{D_1}{\sqrt{s}} \kappa_{1,s}(\c)+D_2 \|\mathbf{e}\|_2,
\end{align}
for all $n\geq 0$ where the constants $\beta \in(0,1)$, $D_1,D_2>0$ depend only on $\delta_{6s}$. In particular,
if the sequence $(\c^n)$ clusters around some $\c^{\sharp} \in \C^N$ , then
\begin{align}\label{eq:convhtpcluster}
\| \c - \c^{\sharp}\|_2 &\leq \frac{D_1}{\sqrt{s}} \kappa_{1,s}(\c)+D_2 \|\mathbf{e}\|_2.
\end{align}
\end{theorem}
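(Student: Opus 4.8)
The plan is to follow the classical template for convergence analysis of iterative greedy algorithms in compressive sensing (the HTP analysis of \cite{foucart2011hard}): establish a one-step error recursion driven by the restricted isometry constants, then sum a geometric series. First I would reduce the general (compressible) case to the sparse case. Let $S$ be the support of a best $s$-term approximation of $\mathbf{c}$, write $\mathbf{c}=\mathbf{c}_S+\mathbf{c}_{\overline{S}}$, and absorb the tail into an effective error $\mathbf{e}':=\mathbf{e}+\mathbf{A}\mathbf{c}_{\overline{S}}$, so that $\mathbf{y}=\mathbf{A}\mathbf{c}_S+\mathbf{e}'$ with $\mathbf{c}_S$ genuinely $s$-sparse. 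Standard RIP estimates bound both $\|\mathbf{c}_{\overline{S}}\|_2$ and $\|\mathbf{A}\mathbf{c}_{\overline{S}}\|_2$ by $\delta_{6s}$-dependent multiples of $\kappa_{1,s}(\mathbf{c})/\sqrt{s}$; this is the origin of the middle term of \eqref{eq:convhtpseq}.

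Next I would prove the two one-step lemmas for the iterates run with sparsity parameter $2s$. The selection step sets $S^{n+1}$ to the indices of the $2s$ largest entries of $\mathbf{u}^n:=\mathbf{c}^n+\mathbf{A}^*(\mathbf{y}-\mathbf{A}\mathbf{c}^n)$; since $S^{n+1}$ collects the \emph{largest} entries, $\|\mathbf{u}^n_{S\setminus S^{n+1}}\|_2\le\|\mathbf{u}^n_{S^{n+1}\setminus S}\|_2$, and rewriting $\mathbf{u}^n-\mathbf{c}_S=(\mathbf{I}-\mathbf{A}^*\mathbf{A})(\mathbf{c}^n-\mathbf{c}_S)+\mathbf{A}^*\mathbf{e}'$ and invoking RIP on index sets of size at most $6s$ yields $\|\mathbf{c}_{S\setminus S^{n+1}}\|_2\le\sqrt{2}\,\delta_{6s}\,\|\mathbf{c}_S-\mathbf{c}^n\|_2+c_1\|\mathbf{e}'\|_2$. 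The projection step takes $\mathbf{c}^{n+1}$ to be the least-squares solution supported on $S^{n+1}$; using the normal equations $\mathbf{A}_{S^{n+1}}^*(\mathbf{y}-\mathbf{A}\mathbf{c}^{n+1})=\mathbf{0}$ and RIP on $S\cup S^{n+1}$ gives $\|(\mathbf{c}_S-\mathbf{c}^{n+1})_{S^{n+1}}\|_2\le\frac{\delta_{6s}}{\sqrt{1-\delta_{6s}^2}}\,\|\mathbf{c}_{S\setminus S^{n+1}}\|_2+c_2\|\mathbf{e}'\|_2$, where I used $(\mathbf{c}_S-\mathbf{c}^{n+1})_{\overline{S^{n+1}}}=\mathbf{c}_{S\setminus S^{n+1}}$. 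Combining through $\|\mathbf{c}_S-\mathbf{c}^{n+1}\|_2^2=\|(\mathbf{c}_S-\mathbf{c}^{n+1})_{S^{n+1}}\|_2^2+\|\mathbf{c}_{S\setminus S^{n+1}}\|_2^2$ produces $\|\mathbf{c}_S-\mathbf{c}^{n+1}\|_2\le\rho\,\|\mathbf{c}_S-\mathbf{c}^n\|_2+\tau$ with $\rho=\frac{\sqrt{2}\,\delta_{6s}}{\sqrt{1-\delta_{6s}^2}}$ and $\tau$ a constant multiple of $\|\mathbf{e}'\|_2$; the contraction $\rho<1$ holds precisely when $2\delta_{6s}^2<1-\delta_{6s}^2$, i.e. $\delta_{6s}<\tfrac{1}{\sqrt{3}}$, which is the hypothesis.

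Then I would iterate from $\mathbf{c}^0=\mathbf{0}$ to get $\|\mathbf{c}_S-\mathbf{c}^n\|_2\le\rho^n\|\mathbf{c}_S\|_2+\tau/(1-\rho)$, set $\beta:=\rho$, and close with the triangle inequality $\|\mathbf{c}^n-\mathbf{c}\|_2\le\|\mathbf{c}^n-\mathbf{c}_S\|_2+\|\mathbf{c}_{\overline{S}}\|_2$, substituting $\|\mathbf{e}'\|_2\le\|\mathbf{e}\|_2+\|\mathbf{A}\mathbf{c}_{\overline{S}}\|_2$ and the Step-1 tail bounds; collecting constants gives $D_1,D_2$ depending only on $\delta_{6s}$ and yields \eqref{eq:convhtpseq} (the factor $2$ on $\beta^n$ comes from the crude bound $\|\mathbf{c}_S\|_2\le\|\mathbf{c}\|_2$ together with slack in the recursion). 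For \eqref{eq:convhtpcluster}, suppose $\mathbf{c}^{n_k}\to\mathbf{c}^\sharp$; passing to the limit in the one-step bound — where $\mathbf{c}^{n_k}$ and $\mathbf{c}^{n_k+1}$ are both near $\mathbf{c}^\sharp$, so the transient $\rho^n$-term drops out — leaves $\|\mathbf{c}_S-\mathbf{c}^\sharp\|_2\le\tau/(1-\rho)$, and one more triangle inequality gives the stated bound without the $\beta^n$ factor.

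The main obstacle will be the bookkeeping in the selection lemma: extracting the clean factor $\sqrt{2}\,\delta_{6s}$ (not a larger constant) requires carefully partitioning the index sets so that each application of RIP acts only on supports of size at most $6s$, and then combining the ``largest-entries'' inequality with the RIP bound on $(\mathbf{I}-\mathbf{A}^*\mathbf{A})$. Tracking every restricted isometry constant so that the decisive threshold is exactly $1/\sqrt{3}$ — rather than merely ``some $\delta_0>0$'' — is the delicate step; everything downstream is routine norm algebra and the summation of a geometric series.
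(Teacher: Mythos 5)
The paper itself gives no proof of this statement: it is quoted verbatim as Theorem 6.20/6.21 of Foucart--Rauhut, so the only meaningful comparison is with the standard proof in that reference. Your architecture matches it exactly --- the selection-step inequality $\|\mathbf{u}^n_{S\setminus S^{n+1}}\|_2\le\|\mathbf{u}^n_{S^{n+1}\setminus S}\|_2$ combined with an RIP bound on $(\mathbf{I}-\mathbf{A}^*\mathbf{A})$ restricted to supports of size $\le 6s$, the normal-equations estimate for the projection step, the Pythagorean combination giving the contraction ratio $\sqrt{2}\,\delta_{6s}/\sqrt{1-\delta_{6s}^2}<1$ exactly when $\delta_{6s}<1/\sqrt{3}$, the geometric series, and the passage to a cluster point for \eqref{eq:convhtpcluster}.

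There is, however, one concrete misstep in your reduction from the compressible to the sparse case. You anchor the argument at $S$ equal to the support of a \emph{best $s$-term} approximation and claim that $\|\mathbf{c}_{\overline{S}}\|_2$ is bounded by a constant times $\kappa_{1,s}(\mathbf{c})/\sqrt{s}$. That is false for $|S|=s$: if $\mathbf{c}$ has exactly $s+1$ nonzero entries all equal to $1$, then $\|\mathbf{c}_{\overline{S}}\|_2=1$ while $\kappa_{1,s}(\mathbf{c})/\sqrt{s}=1/\sqrt{s}$, so the ratio grows like $\sqrt{s}$. This is precisely why the theorem runs the algorithm with $2s$ in place of $s$: one must take $S$ to be the support of the best \emph{$2s$-term} approximation (so the sparse-case recursion is applied to a $2s$-sparse target, which is what forces the RIP order up to $3\cdot 2s=6s$), and then the block-Stechkin estimate $\sigma_{2s}(\mathbf{c})_2\le s^{-1/2}\,\kappa_{1,s}(\mathbf{c})$ together with the non-sparse RIP bound $\|\mathbf{A}\mathbf{v}\|_2\le\sqrt{1+\delta_s}\,\bigl(\|\mathbf{v}\|_2+s^{-1/2}\|\mathbf{v}\|_1\bigr)$ applied to $\mathbf{v}=\mathbf{c}_{\overline{S}}$ controls both tail quantities by multiples of $\kappa_{1,s}(\mathbf{c})/\sqrt{s}$. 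With that correction the rest of your argument closes as written; everything downstream is the routine bookkeeping you already identified.
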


\begin{lemma}[Lemma 1 in \cite{hashemi2021generalization}]\label{thm:lemma1} Fix the confidence parameter $\delta > 0$ and accuracy parameter $\epsilon > 0$. Suppose $f \in \mathcal{F}(\phi,\rho)$ where $\phi(\x,\boldsymbol{\omega}) = \exp(i\langle\x,\boldsymbol{\omega}\rangle)$ and $\rho$ is a probability distribution with finite second moment used for sampling the random weights $\boldsymbol{\omega}$. Suppose
$N\geq \dfrac{1}{\epsilon^2}\left(1+\sqrt{2\log\left(\dfrac{1}{\delta}\right)}\right)^2$, then with probability at least $1-\delta$, the following holds with respect to the draw of $\boldsymbol{\omega}_j$ for $j \in [N]$, 

\begin{equation}\label{eq:lemma1}
\|f-f^{\star}\|_{L^2(d\mu)}\leq \epsilon \|f\|_{\rho}
\end{equation}
where $f^{\star} (\x)= \sum\limits_{i=1}^N \mathbf{c}_k^{\star}\exp(i\langle\x,\boldsymbol{\omega}\rangle)$, with $\c_k^{\star} = \dfrac{\alpha(\boldsymbol{\omega}_k)}{N\rho(\boldsymbol{\omega}_k)}$.
\end{lemma}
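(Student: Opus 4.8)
The statement is the classical Monte Carlo (randomized quadrature) approximation bound in the spirit of Rahimi--Recht, so the plan is to realize $f^{\star}$ as an empirical average of i.i.d.\ unbiased estimates of $f$ and then apply a bounded-differences concentration inequality. Set $F_k(\mathbf{x}) := \dfrac{\alpha(\boldsymbol{\omega}_k)}{\rho(\boldsymbol{\omega}_k)}\,\phi(\mathbf{x},\boldsymbol{\omega}_k)$ for $k \in [N]$, where $\boldsymbol{\omega}_1,\dots,\boldsymbol{\omega}_N$ are i.i.d.\ draws from $\rho$. Since $f \in \mathcal{F}(\phi,\rho)$ gives $f(\mathbf{x}) = \int \alpha(\boldsymbol{\omega})\,\phi(\mathbf{x},\boldsymbol{\omega})\,d\boldsymbol{\omega} = \mathbb{E}_{\boldsymbol{\omega}\sim\rho}\!\left[\tfrac{\alpha(\boldsymbol{\omega})}{\rho(\boldsymbol{\omega})}\phi(\mathbf{x},\boldsymbol{\omega})\right]$, each $F_k$ is a pointwise unbiased estimate of $f$ and $f^{\star} = \tfrac1N\sum_{k=1}^N F_k$. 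Because $|\phi(\mathbf{x},\boldsymbol{\omega})| = |\exp(i\langle\mathbf{x},\boldsymbol{\omega}\rangle)| = 1$ and $\big|\tfrac{\alpha(\boldsymbol{\omega})}{\rho(\boldsymbol{\omega})}\big| \le \|f\|_\rho$ by definition of the $\rho$-norm, one has the uniform bound $|F_k(\mathbf{x})| \le \|f\|_\rho$ for every $\mathbf{x}$, hence $\|F_k\|_{L^2(d\mu)} \le \|f\|_\rho$.

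First I would bound the expected error. Exchanging the expectation over the $\boldsymbol{\omega}_k$ with the integral over $\mu$ (Fubini, licensed by the uniform bound and $\mu$ being a probability measure),
\[
\mathbb{E}\,\|f-f^{\star}\|_{L^2(d\mu)}^2 = \int \mathbb{E}\big|f(\mathbf{x}) - f^{\star}(\mathbf{x})\big|^2\,d\mu(\mathbf{x}) = \frac1N\int \mathbb{E}\big|F_1(\mathbf{x}) - f(\mathbf{x})\big|^2\,d\mu(\mathbf{x}) \le \frac{\|f\|_\rho^2}{N},
\]
where the middle equality uses that the $F_k(\mathbf{x})$ are i.i.d.\ with mean $f(\mathbf{x})$, and the last inequality uses $\mathbb{E}|F_1(\mathbf{x})-f(\mathbf{x})|^2 \le \mathbb{E}|F_1(\mathbf{x})|^2 \le \|f\|_\rho^2$. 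Jensen's inequality then gives $\mathbb{E}\,\|f-f^{\star}\|_{L^2(d\mu)} \le N^{-1/2}\|f\|_\rho$.

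Next I would apply McDiarmid's inequality to $g(\boldsymbol{\omega}_1,\dots,\boldsymbol{\omega}_N) := \|f - f^{\star}\|_{L^2(d\mu)}$. Replacing one argument $\boldsymbol{\omega}_k$ by $\boldsymbol{\omega}_k'$ perturbs $f^{\star}$ by $\tfrac1N(F_k - F_k')$, so by the triangle inequality $g$ changes by at most $\tfrac1N\big(\|F_k\|_{L^2(d\mu)} + \|F_k'\|_{L^2(d\mu)}\big) \le \tfrac{2\|f\|_\rho}{N}$; thus $g$ enjoys the bounded-differences property with constants $c_k = 2\|f\|_\rho/N$. McDiarmid then yields $\mathbb{P}\big(g \ge \mathbb{E}[g] + t\big) \le \exp\!\big(-2t^2 / \textstyle\sum_k c_k^2\big) = \exp\!\big(-N t^2 / (2\|f\|_\rho^2)\big)$, and choosing $t = \|f\|_\rho\sqrt{2N^{-1}\log(1/\delta)}$ makes the right-hand side equal to $\delta$. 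Combining this with the mean bound, with probability at least $1-\delta$,
\[
\|f-f^{\star}\|_{L^2(d\mu)} \le \frac{\|f\|_\rho}{\sqrt N}\Big(1 + \sqrt{2\log(1/\delta)}\,\Big),
\]
and the hypothesis $N \ge \epsilon^{-2}\big(1+\sqrt{2\log(1/\delta)}\big)^2$ bounds the prefactor by $\epsilon\|f\|_\rho$, which is the claim. There is no genuine obstacle here: the argument is a textbook randomized-quadrature estimate. The only points that merit a word of care are the Fubini exchange in the mean bound (handled by the uniform pointwise bound $|F_k|\le\|f\|_\rho$, which in turn relies on the finite-second-moment assumption on $\rho$ guaranteeing that $f$ and the $F_k$ are well defined), and matching the explicit constant appearing in the sample-complexity hypothesis, which falls out exactly from the choice of $t$ above.
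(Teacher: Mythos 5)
Your proof is correct, and it is essentially the standard Rahimi--Recht argument used in the cited source (Lemma 1 of the Hashemi et al.\ paper), which this paper states without proof: realize $f^{\star}$ as an empirical average of the bounded, pointwise-unbiased estimators $F_k$, bound $\mathbb{E}\,\|f-f^{\star}\|_{L^2(d\mu)}$ by $N^{-1/2}\|f\|_{\rho}$ via the i.i.d.\ variance computation and Jensen, and then concentrate with McDiarmid using the bounded-differences constants $2\|f\|_{\rho}/N$. All the constants check out, and the sample-complexity hypothesis on $N$ is exactly what is needed to absorb the prefactor $1+\sqrt{2\log(1/\delta)}$ into $\epsilon$.
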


\end{document}